\newtheorem{theorem}{Theorem}
\newtheorem{corollary}[theorem]{Corollary}
\algnewcommand\algorithmicreturn{\textbf{return }}
\algnewcommand\RETURN{\State \algorithmicreturn}%
\DeclareMathOperator{\st}{s.t.}
\DeclareMathOperator{\argmin}{argmin}
\newcommand{\setR}{\mathbb{R}}
\newcommand{\vareps}{\varepsilon}
\newcommand{\vecOne}{\bm{1}}
\newcommand{\E}{\mathbb{E}}
\newcommand{\sprod}[2]{\langle{#1}, {#2}\rangle}
\newcommand{\norm}[1]{\left\Vert#1\right\Vert}
\newcommand{\ignore}[1]{}
\title{Optimization for Classical Machine Learning Problems on the GPU}
\author {
    S\"oren Laue\textsuperscript{\rm 1, 2},
    Mark Blacher\textsuperscript{\rm 1},
    Joachim Giesen\textsuperscript{\rm 1}
}
\begin{document}

\maketitle
\thispagestyle{plain}
\pagestyle{plain}

\begin{abstract}
Constrained optimization problems arise frequently in classical machine learning. There exist frameworks addressing constrained optimization, for instance, CVXPY and GENO. However, in contrast to deep learning frameworks, GPU support is limited. Here, we extend the GENO framework to also solve constrained optimization problems on the GPU. The framework allows the user to specify constrained optimization problems in an easy-to-read modeling language. A solver is then automatically generated from this specification. When run on the GPU, the solver outperforms state-of-the-art approaches like CVXPY combined with a GPU-accelerated solver such as cuOSQP or SCS by a few orders of magnitude.
\end{abstract}

\section{Introduction}


Training classical machine learning models typically means solving an optimization problem. Hence, the design and implementation of solvers for training these models has been and still is an active research topic. While the use of GPUs is standard in training deep learning models, most solvers for classical machine learning problems still target CPUs.  Easy-to-use deep learning frameworks like TensorFlow or PyTorch can be used to solve unconstrained problems on the GPU. However, many classical problems entail constrained optimization problems. So far,  deep learning frameworks do not support constrained problems, not even problems with simple box constraints, that is, bounds on the variables. Optimization frameworks for classical machine learning like CVXPY~\cite{AgrawalVDB18, DiamondB16} can handle constraints but typically address CPUs. Here, we extend the GENO framework~\cite{LaueMG2019} for constrained optimization to also target GPUs.

Adding GPU support to an optimization framework for classical machine learning problems is not straightforward. An efficient algorithmic framework could use the limited-memory quasi-Newton method L-BFGS-B~\cite{ByrdLNZ95} that allows to solve large-scale optimization problems with box constraints. More general constraints can then be addressed by the augmented Lagrangian approach~\cite{Hestenes69,Powell69}. However, porting the L-BFGS-B method to the GPU does not provide any efficiency gain for the following reason: In each iteration the method involves an inherently sequential Cauchy point computation that determines the variables that will be modified in the current iteration. The Cauchy point is computed by minimizing a quadratic approximation over the gradient projection path, resulting in a large number of sequential scalar computations, which means that all but one core on a multicore processor will be idle. This problem is aggrevated on modern GPUs that feature a few orders of magnitude  more cores than a standard CPU, e.g., 2304 instead of 18. 

Let us substantiate this issue by the following example (see the appendix for details). On a non-negative least squares problem, the  \mbox{L-BFGS-B} algorithm needs $4.9$ seconds in total until convergence, where $0.6$ seconds are spent in the Cauchy point subroutine. When run on a modern GPU, the same code needs $5.2$ seconds in total while $4.6$ seconds are spent in the Cauchy point subroutine. It can be seen that while all other parts of the L-BFGS-B algorithm can be parallelized nicely on a GPU, the inherently sequential Cauchy point computation does not and instead, dominates the computation time on the GPU; as a result, the \mbox{L-BFGS-B} method is not faster on a GPU than on a CPU, rendering the benefits of the GPU moot. 

Here, we present a modified \mbox{L-BFGS-B} method that runs efficiently on the GPU, because it avoids the sequential Cauchy point computation. For instance, on the same problem as above, it needs $0.8$ seconds in total on the GPU.
We integrate an implementation of this method into the GENO framework for constrained optimization. Experiments on several classical machine learning problems show that our implementation outperforms state-of-the-art approaches like the combination of CVXPY with GPU-accelerated solvers such as cuOSQP or SCS by several orders of magnitude.

\textbf{Contributions.}
The contributions of this paper can be summarized as follows:\\
1. We design a provably convergent L-BFGS-B algorithm that can handle box constraints on the variables and runs efficiently on the GPU. \\
2. We combine our algorithm with an Augmented Lagrangian approach for solving constrained optimization problems. \\
3. We integrate our approach 
into the GENO framework for constrained optimization.
It outperforms comparable state-of-the-art approaches by several orders of magnitude.


\section{State of the Art}

Optimization for classical machine learning is either addressed by problem specific solvers, often wrapped in a library or toolbox, or by frameworks that combine a modeling language with a generic solver. The popular toolbox scikit-learn~\cite{scikit-learn} does not provide GPU support yet. 
Spark~\cite{ZahariaXWDADMRV16} has recently added GPU support and hence, enabling GPU acceleration for some algorithms from its machine learning toolbox MLlib~\cite{mllib}. The modeling language CVXPY~\cite{AgrawalVDB18, DiamondB16} can be paired with solvers like cuOSQP~\cite{SchubigerBL20} or SCS~\cite{ocpb:16, scs} that provide GPU support. 

In contrast to deep learning, classical machine learning can involve constraints, which poses extra algorithmic challenges. There are several algorithmic approaches for dealing with constraints that could be adapted for the GPU. As mentioned before, we have decided to adapt the \mbox{L-BFGS-B} method~\cite{ByrdLNZ95}. Its original Fortran-code~\cite{ZhuBLN97} is still the predominant solver in many toolboxes like scikit-learn or scipy for solving box-constrained optimization problems. In the following, we briefly describe algorithmic alternatives to the L-BFGS-B method and argue why we decided against using them for our purposes.   

Proximal methods, including alternating direction method of multipliers (ADMM) have been used for solving constrained and box-constrained optimization problems. The literature on proximal methods is vast, see for instance,~\cite{boydADMM, ParikhB14} for an overview. Prominent examples that relate to our work are OSQP~\cite{StellatoBGBB20} and SCS~\cite{ocpb:16, scs}. Unfortunately, both methods require a large number of iterations. One approach to mitigate this problem is to keep the penalty parameter $\rho$, which ties the constraints to the objective function fixed for each iteration. Then, the Cholesky-decomposition of the KKT system needs to be computed only once and can be reused in subsequent iterations, leading to a slow first iteration but very fast consecutive iterations. 
The OSQP solver has been shown to be the fastest general purpose solver for quadratic optimization problems~\cite{StellatoBGBB20}, beating commercial solvers like Gurobi as well as Mosek. Gurobi as well as Mosek do not provide GPU support and so far, there is no intention to do so in the near future~\cite{gurobi}. OSQP does not support GPU acceleration either. However, it has been ported to the GPU as cuOSQP~\cite{SchubigerBL20}, where it was shown to outperform its CPU version by an order of magnitude.  

The simplest way to solve box-constrained optimization problems is probably the projected gradient descent method~\cite{Nesterov04}. However, it is as slow as gradient descent and not applicable to practical problems. Hence, there have been a number of methods proposed that try to combine the projection approach with better search directions. For instance, \cite{Berg20} applies L-BFGS updates only to the currently active face. If faces switch between iterations, which happens in almost all iterations, it falls back to standard spectral gradient descent. A similar approach is the non-monotone spectral projected gradient descent approach as described in~\cite{SchmidtKS11}. It also performs backtracking along the projection arc and cannot be parallelized efficiently. Another variant solves a sub-problem in each iteration that is very expensive and hence, only useful when the cost of computing the function value and gradient of the original problem is very expensive~\cite{SchmidtBFM09}, which is typically not the case for standard machine learning problems. Another approach for solving box-constrained optimization problems has been described in~\cite{KimSD10}. However, it is restricted to strongly convex problems. For small convex problems, a projected Newton method has been described in~\cite{Bertsekas82}.

Nesterov acceleration~\cite{Nesterov83} has also been applied to proximal methods~\cite{BeckT09, LiL15}. However, similar to Nesterov's optimal gradient descent algorithm~\cite{Nesterov83}, one needs several Lipschitz constants of the objective function, which are usually not known. Quasi-Newton methods do not need to know such parameters and have been shown to perform equally well or even better. 
Convergence rates have been obtained for quasi-Newton methods on special instances, e.g., quadratic functions with unbounded domain. Recently, improved convergence rates have been proved in~\cite{RodomanovN21} for the unbounded case.

\section{Algorithm} \label{sec:algo}

\ignore{
Here, we present our extension of the L-BFGS algorithm that can additionally handle box constraints and 
runs efficiently on the GPU. In the algorithm and its analysis we are using the following notation. 
}

Here, we present our extension of the L-BFGS algorithm for minimizing a differentiable function $f\colon\setR^n\to\setR$, which can additionally handle box constraints on the variables, i.e., $l\leq x\leq u$, $l, u\in\setR^n\cup \{\pm\infty\}$, and which runs efficiently on the GPU.

\paragraph{Notation.} A sequence of scalars or vectors is denoted by upper indices, e.g., $x^1,\ldots,x^k$. The projection of a vector $x\in\setR^n$ to the coordinate set $S\subseteq\{1, \ldots n\}=:[n]$ is denoted as $x[S]$. If $S$ is a singelton $\{i\}$, then $x[S]$ is just the $i$-th coordinate $x_i$ of $x$. Similarly, the projection of a square matrix $B$ onto the rows and columns in an index set $S$ is denoted by $B[S, S]$. Finally, the Euclidean norm of $x$ is denoted by $\norm{x}$, and the corresponding scalar product between vectors $u$ and $v$ is denoted as $\sprod{u}{v}$. 

\begin{algorithm}[h!]
   \caption{GPU-efficient L-BFGS-B Method}
   \label{algo:1}
\begin{algorithmic}[1]
   \STATEx {\hspace{-0.5cm} \bfseries Input:} initial iterate $x^0$ with $l\leq x^0\leq u$
   \vspace{0.2cm}
   
   \STATE {\bfseries Initialization:} set $k\leftarrow 0$
   \REPEAT
   \STATE compute $\nabla f(x^k)$ and set $S^k$ of free variables (Eq.~\ref{eq:ws})
   \STATE solve $-\nabla f(x^k)[S^k]=B^k[S^k, S^k] d^k[S^k]$ using\newline
   \hspace*{2.5ex}\mbox{L-BFGS} modified two-loop algorithm, see appendix \label{line:0}
   \STATE set $d^k[\bar S^k] = 0$
   \STATE $p^k$ = projectDirection$(x^k, \nabla f(x^k), d^k)$
   \STATE $x^{k+1} = x^k + \alpha^k p^k$ using line search with appropriate \newline 
   \hspace*{2.2ex} upper bound on $\alpha^k$
   \STATE $y^k=\nabla f(x^{k+1}) - \nabla f(x^k)$ and $s^k=x^{k+1}-x^{k}$
   \STATE store new curvature pair $(y^k, s^k)$ 
   \STATE set $k\leftarrow k+1$
   \UNTIL{converged}
\end{algorithmic}
\end{algorithm}

\begin{algorithm}[h!]
   \caption{projectDirection$(x^k, \nabla f(x^k), d^k)$}
   \label{algo:2}
\begin{algorithmic}[1]
  \STATE compute $z^k = x^k+d^k$ and project $z^k$ onto feasible region
  \STATE compute $p^k = z^k - x^k$
  \IF{$\sprod{p^k}{\nabla f(x^k)} \leq -\vareps \|p^k\|^2$ and $\|p^k\|^2 \geq \vareps$} \label{line:1}
  \RETURN $p^k$
  \ELSE
  \STATE set $p^k = d^k$ \label{line:2}
  \STATE set $(p^k)_i = 0 \, \,\, \forall i$ with $ (d^k)_i < 0$ and $ (x^k)_i \leq l_i + \vareps$
  \STATE set $(p^k)_i = 0 \, \,\, \forall i$ with $ (d^k)_i > 0$ and $ (x^k)_i \geq u_i - \vareps$ \label{line:3}
  \RETURN $p^k$
  \ENDIF
\end{algorithmic}
\end{algorithm}

Like the original  L-BFGS-B algorithm, our extension runs in iterations until a convergence criterion is met. Likewise, it distinguishes between fixed and free variables in each iteration, i.e., variables that are fixed at their boundaries and variables that are optimized in the current iteration. In contrast to the original L-BFGS-B algorithm, we can avoid the inherently sequential Cauchy point computation by determining the fixed and free variables directly. Given $\varepsilon >0$, we compute in iteration $k$ the index set (working set)
\begin{equation}
  \label{eq:ws}
\begin{aligned}
S^k = [n] \setminus \big ( & \big\{ i \ |\  (x^k)_i \leq l_i + \vareps \mbox{ and } \nabla f(x^k)_i \geq 0\big\} \\ 
\cup \, & \big\{ i \ |\  (x^k)_i \geq u_i - \vareps \mbox{ and } \nabla f(x^k)_i \leq 0\big\} \big )
\end{aligned}
\end{equation}
of free variables. Here, $\{i \ |\  (x^k)_i \leq l_i + \vareps \mbox{ and } \nabla f(x^k)_i \geq 0\}$ holds the indices of optimization variables that, at iteration $k$, are within an $\vareps$-interval of the lower bound. Analogously, $\{i \ |\  (x^k)_i \geq u_i - \vareps \mbox{ and } \nabla f(x^k)_i \leq 0\}$ holds all indices, where the optimization variable is close to the upper bound. The complement of $S^k$, i.e., the index set of all non-free (fixed) variables is denoted by $\bar{S^k}$. Algorithm~\ref{algo:1} computes the quasi-Newton search direction $d^k[S^k]$ only on the free variables (Line~\ref{line:0}). It then projects this direction onto the feasible set using Algorithm~\ref{algo:2}. If it is a feasible descent direction, it takes a step into this direction. Otherwise, it takes a step into the original quasi-Newton direction until it hits the boundary of the feasible region. While the original \mbox{L-BFGS-B} algorithm uses a line search with quadratic and cubic interpolation to satisfy the strong Wolfe conditions, we observed that this does not provide any benefit for optimization problems from machine learning. Hence, our implementation uses a simple backtracking line search to satisfy the Armijo condition~\cite{NocedalW99}. Even when the function is convex and satisfies the curvature condition for all variables, it does not necessarily satisfy the curvature condition for the set of free variables with indices in $S^k$. Hence, satisfying the strong Wolfe conditions is not necessary and instead the curvature condition is checked for the current set of free variables in the modified two-loop Algorithm~3 (see the appendix). The following theorem asserts that our algorithm converges to a stationary point.

\begin{theorem}\label{thm:1}
Let $f$ be a differentiable function with an $L$-Lipschitz continuous gradient. 
If $f$ is bounded from below, then Algorithm~\ref{algo:1} converges to a feasible stationary point. 
\end{theorem}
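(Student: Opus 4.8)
The plan is to treat Algorithm~\ref{algo:1} as a feasible descent method and run a Zoutendijk-type argument, exploiting that $f$ is bounded below. Write $\Omega=\{x\in\setR^n\mid l\le x\le u\}$ for the feasible box and let $\Pi$ be the Euclidean projection onto $\Omega$; a feasible point $x^*$ is \emph{stationary} iff the projected gradient $\Pi(x^*-\nabla f(x^*))-x^*$ vanishes, equivalently iff the KKT sign conditions hold coordinatewise. First I would establish, by induction, that every iterate is feasible: $x^0\in\Omega$, and in either branch of Algorithm~\ref{algo:2} the update $x^{k+1}=x^k+\alpha^k p^k$ stays in $\Omega$ --- in the \textbf{if}-branch because $z^k=\Pi(x^k+d^k)\in\Omega$ and $\Omega$ is convex, so the whole segment $[x^k,z^k]$ is feasible and $\alpha^k\in(0,1]$; in the \textbf{else}-branch because the prescribed upper bound on $\alpha^k$ in Algorithm~\ref{algo:1} is exactly the largest step keeping $x^k+\alpha^k p^k\in\Omega$.

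Next I would show $p^k$ is a genuine descent direction in both branches. In the \textbf{if}-branch this is enforced explicitly by the test $\sprod{p^k}{\nabla f(x^k)}\le-\vareps\|p^k\|^2$. For the \textbf{else}-branch, the quasi-Newton system in Line~\ref{line:0} gives $d^k[S^k]=-\,B^k[S^k,S^k]^{-1}\nabla f(x^k)[S^k]$ with $d^k[\bar S^k]=0$, so positive definiteness of $B^k[S^k,S^k]$ (maintained by the curvature check in the modified two-loop recursion) yields $\sprod{d^k}{\nabla f(x^k)}=-\sprod{d^k[S^k]}{B^k[S^k,S^k]\,d^k[S^k]}<0$. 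The coordinates zeroed in Lines~\ref{line:2}--\ref{line:3} are exactly the free coordinates at a bound whose $d^k$-component points out of $\Omega$; by the definition of $S^k$ in Eq.~\ref{eq:ws}, a free coordinate with $(x^k)_i\le l_i+\vareps$ has $\nabla f(x^k)_i<0$ and one with $(x^k)_i\ge u_i-\vareps$ has $\nabla f(x^k)_i>0$, so each zeroed term satisfies $(d^k)_i\nabla f(x^k)_i>0$ and removing it only decreases the inner product. Hence $\sprod{p^k}{\nabla f(x^k)}\le\sprod{d^k}{\nabla f(x^k)}<0$.

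With descent and feasibility in hand, I would invoke the $L$-Lipschitz gradient and the Armijo backtracking rule to get the sufficient-decrease estimate $f(x^k)-f(x^{k+1})\ge c\,\frac{\sprod{p^k}{\nabla f(x^k)}^2}{\|p^k\|^2}$ for a constant $c>0$ depending on $L$ and the Armijo parameters. Summing and using that $f$ is bounded below (so the monotone sequence $f(x^k)$ converges) yields $\sum_k \frac{\sprod{p^k}{\nabla f(x^k)}^2}{\|p^k\|^2}<\infty$, hence this term tends to $0$. A useful observation is that each \textbf{if}-branch step forces $\sprod{p^k}{\nabla f(x^k)}^2/\|p^k\|^2\ge\vareps^2\|p^k\|^2\ge\vareps^3$, a fixed positive constant; therefore the \textbf{if}-branch can be taken only finitely often, and the asymptotics are governed entirely by the \textbf{else}-branch. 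There, uniform upper and lower eigenvalue bounds on $B^k[S^k,S^k]$ (the standard L-BFGS bounds under a bounded, curvature-satisfying set of pairs $(y^k,s^k)$) make $p^k$ gradient-related, so $\sprod{p^k}{\nabla f(x^k)}^2/\|p^k\|^2\to0$ forces $\|\nabla f(x^k)[S^k]\|\to0$; combined with the correct KKT signs guaranteed on $\bar S^k$ by Eq.~\ref{eq:ws}, this drives the projected gradient to zero, so that every accumulation point is a feasible stationary point.

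The main obstacle I anticipate is the last step: converting the algorithm-internal decrease quantity into a bound on the true projected-gradient stationarity measure \emph{uniformly} across the $\vareps$-relaxed working set. Two points need care. First, the eigenvalue bounds on $B^k[S^k,S^k]$ must be secured from the curvature condition that is only enforced on the current free set, so that $p^k$ is gradient-related with constants independent of $k$. Second, and most delicately, one must reconcile the fixed $\vareps$-band of Eq.~\ref{eq:ws} with the exact active set at a limit point: a coordinate frozen within $\vareps$ of a bound but not exactly at it naturally yields only $\vareps$-approximate stationarity, so bridging to exact stationarity --- whether through the sign conditions built into Eq.~\ref{eq:ws}, a tolerance-driving mechanism, or a careful active-set stabilization argument --- is where the real work lies. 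Handling the case where the line-search step is capped by feasibility (a new bound becoming active, which can make the per-step decrease small) within the same estimate is the remaining technical wrinkle, resolved by noting that such activations can occur only finitely often before the active set settles.
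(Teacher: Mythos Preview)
Your core mechanics---feasibility by induction, the descent-direction argument in the \textbf{else}-branch via the sign analysis on the zeroed coordinates, and the appeal to uniform eigenvalue bounds $0<c\le\lambda(B^k[S^k,S^k])\le C$ coming from the curvature check---are exactly what the paper does. The divergence is in the convergence mechanism. You run a Zoutendijk summability argument; the paper does something more elementary and more robust here: it shows that \emph{every} iteration that has not yet met the stopping criterion $\|\nabla f(x^k)[S^k]\|<\vareps$ produces a \emph{fixed} constant decrease in $f$, namely at least $\min\{\vareps^3/(2L),\,\vareps^2 c/(2C)\}$. Since $f$ is bounded below this forces finite termination, and the paper reads ``converges to a stationary point'' as ``reaches the $\vareps$-stopping criterion''---so the $\vareps$-approximate stationarity issue you correctly flag is simply absorbed into the interpretation of the theorem rather than resolved.

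The constant-decrease route also dissolves the obstacle you call the ``remaining technical wrinkle.'' Your Zoutendijk bound $f(x^k)-f(x^{k+1})\ge c\,\sprod{p^k}{\nabla f(x^k)}^2/\|p^k\|^2$ is the standard one for \emph{unconstrained} Armijo backtracking; it need not hold when $\alpha^k$ is clipped by feasibility, and your proposed fix (``activations occur finitely often before the active set settles'') is not justified---nothing in the algorithm forces the working set to stabilise. The paper instead works directly with the clipped step: in the \textbf{else}-branch every nonzero component of $p^k$ has its coordinate at least $\vareps$ from the violated bound, so one may take $\alpha^k\ge\vareps/\|d^k\|_\infty$; plugging this (or $\alpha^k=c/L$ if smaller) into the descent lemma and using $\|d^k\|_\infty\le\|d^k\|$ together with $\|d^k\|\ge\|\nabla f(x^k)[S^k]\|/C\ge\vareps/C$ yields the uniform decrement $\vareps^2 c/(2C)$. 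If you want to salvage your route, this explicit lower bound on the feasible step is the missing ingredient; once you have it, the Zoutendijk sum is in fact a sum of terms bounded below by a constant, and you recover the paper's finite-termination conclusion anyway.
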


\begin{proof}
We have for any differentiable 
function with $L$-Lipschitz continuous gradient that
\[
  f(x+\alpha p) \leq f(x) + \sprod{\nabla f(x)}{\alpha p} + \frac{L}{2} \|\alpha p\|^2
\]
see, e.g.,~\cite{Nesterov04}. For computing the search direction $p^k$, we distinguish two cases in Algorithm~\ref{algo:2}. In the first case, we have 
\[
\sprod{p^k}{\nabla f(x^k)} \leq -\vareps \|p^k\|^2 \mbox{ and } \|p^k\|^2 \geq \vareps
\] (Algorithm~\ref{algo:2}, Line~\ref{line:1}). Hence, we have 
\[
\begin{split}
f(x^{k+1}) & = f(x^k+\alpha^kp^k) \\
& \leq f(x^k) + \alpha^k\sprod{\nabla f(x^k)}{p^k} + \frac{L}{2}\|\alpha^k p^k\|^2 \\
& \leq f(x^k) - \alpha^k \vareps\|p^k\|^2 + (\alpha^k)^2\frac{L}{2}\|p^k\|^2 \\
\end{split}
\]
If we set $\alpha^k = \frac{\vareps}{L}$, we get
\[
\begin{split}
f(x^{k+1}) \leq f(x^k) - \frac{\vareps^2}{2L}\|p^k\|^2 \leq f(x^k) - \frac{\varepsilon^3}{2L}.
\end{split}
\]
Hence, the objective function reduces at least by a positive constant that is bounded away from $0$ in each iteration.

In the second case, we have the following. For a function $f$ with $L$-Lipschitz continuous gradient and curvature pairs that satisfy the curvature condition $\sprod{y^k}{s^k}\geq \vareps \|y^k\|^2$, the smallest and largest eigenvalue of the Hessian approximation $B^k$ can, in general, be lower and upper bounded by two constants $c$ and $C$, see e.g.,~\cite{MokhtariR15}. Since we require this curvature condition to hold only for the index set $S^k$ that is active in iteration $k$ (see Algorithm~3 in the appendix), we can lower and upper bound the eigenvalues of the submatrix $B^k[S^k, S^k]$ of the Hessian approximation by $0<c$ and $C<\infty$. 
The quasi-Newton direction $d^k$ is computed by solving the equation
\[
-\nabla f(x^k)[S^k]=B^k[S^k, S^k] d^k[S^k].
\]
Multiplying both sides of this equation by $\left(d^k[S^k]\right)^\top$ gives
\[
\begin{split}
-\sprod{\nabla f(x^k)[S^k]}{d^k[S^k]} &=\left(d^k[S^k]\right)^\top B^k[S^k, S^k] d^k[S^k] \\ & \geq c\|d^k[S^k]\|^2.
\end{split}
\] 
Since $d^k[\bar{S^k}]=0$, this inequality can further be simplified to
\begin{equation}\label{eq:1}
\sprod{\nabla f(x^k)}{d^k}  \leq -c\|d^k\|^2.
\end{equation}

Since we are in the second case (Algorithm~\ref{algo:2}, Lines~\ref{line:2}--\ref{line:3}) we know that for all $i\in S^k$, if $d^k_i < 0$ and $x_i\leq l_i +\vareps$ it must hold that $\nabla f(x^k)_i < 0$,  otherwise $i\notin S^k$. In this case we have $d^k_i \cdot \nabla f(x^k)_i > 0$ and at the same time we set $p^k_i=0$. Hence, we have $d^k_i \cdot \nabla f(x^k)_i > p^k_i \cdot \nabla f(x^k)_i$. The case with $d^k_i > 0$ and $x_i\geq u_i +\vareps$ follows analogously. Hence, summing over all indices $i$, we can conclude $\sprod{\nabla f(x^k)}{d^k}\geq \sprod{\nabla f(x^k)}{p^k}$. Combining this inequality with Equation~\eqref{eq:1}, we get $-c\|d^k\|^2\geq\sprod{\nabla f(x^k)}{p^k}$.
Hence, we finally get
\begin{align*}
f(x^{k+1}) & = f(x^k+\alpha^kp^k) \\
& \leq f(x^k) + \alpha^k\sprod{\nabla f(x^k)}{p^k} + \frac{L}{2}\|\alpha^k p^k\|^2  \\
& \leq f(x^k) - \alpha^k c\|d^k\|^2 + (\alpha^k)^2 \frac{L}{2}\|d^k\|^2,
\end{align*}
because $\|p^k\| \leq \|d^k\|$.
Since all $x^k_i$ with $d^k_i\neq 0$ are at least $\vareps$ away from the boundary, we can pick $\alpha^k$ at least $\min_i \frac{\vareps}{|d^k_i|}=\frac{\vareps}{\|d^k\|_\infty}$.
If $\frac{c}{L}\leq \frac{\vareps}{\|d^k\|_\infty}$, we can set $\alpha^k=\frac{c}{L}$ and obtain the same result as in the first case. Otherwise, we set $\alpha^k=\frac{\vareps}{\|d^k\|_\infty}$ and obtain
\[
\begin{split}
f(x^{k+1}) & = f(x^k+\alpha^kp^k) \\ 
&\leq f(x^k) - \frac{\vareps}{\|d^k\|_\infty} c\|d^k\|^2 + \left(\frac{\vareps}{\|d^k\|_\infty}\right)^2\frac{L}{2}\|d^k\|^2 \\
& \leq f(x^k) - \frac{\vareps}{\|d^k\|_\infty} c\|d^k\|^2 + \frac{\vareps}{\|d^k\|_\infty}\frac{c}{L}\frac{L}{2}\|d^k\|^2 \\
&= f(x^k) - \frac{\vareps}{\|d^k\|_\infty}\frac{c}{2}\|d^k\|^2 \\
& \leq f(x^k) - \frac{\vareps c}{2}\|d^k\|,
\end{split}
\]
where the last line follows from $\|d^k\|_\infty \leq \|d^k\|$.
It remains to lower bound the Euclidean norm of $d^k$.  
We have $-\nabla f(x^k)[S^k]=B^k[S^k, S^k] d^k[S^k]$. Taking the squared norm on both sides, we have 
\[
\begin{split}
\norm{\nabla f(x^k)[S^k]}^2  & = \|B^k[S^k, S^k] d^k[S^k]\|^2 \\
& \hspace{-5ex} = \left(d^k[S^k]\right)^\top \left(B^k[S^k, S^k]\right)^\top B^k[S^k, S^k] d^k[S^k]\\
& \hspace{-5ex} \leq C^2 \|d^k[S^k]\|^2  = C^2 \|d^k\|^2
\end{split}
\]
since the eigenvalue of the submatrix $B^k[S^k, S^k]$ can be upper bounded by the constant $C$. 
As long as Algorithm~\ref{algo:1} has not converged, we know for the norm of the projected gradient that $\|\nabla f(x^k)[S^k]\| \geq \vareps$. Thus, we finally have
\[
f(x^{k+1}) \leq f(x^k) - \frac{\vareps c}{2}\|d^k\| \leq f(x^k) - \frac{\vareps^2 c}{2C}.
\]
Hence, also in the second case, the function value decreases by a positive constant in each iteration.

Thus, we make progress in each iteration by at least a small positive constant. Since $f$ is bounded from below, the algorithm will converge to a stationary point. Finally, note that $x^0$ is feasible. By construction and induction over $k$ it follows that $x^{k}$ is feasible for all $k$.
\end{proof}

\begin{corollary} If $f$ satisfies the assumptions of Theorem~\ref{thm:1} and is convex, then Algorithm~\ref{algo:1} converges to a global optimal point.
\end{corollary}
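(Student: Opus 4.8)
The plan is to reduce the statement almost entirely to Theorem~\ref{thm:1} together with the standard fact that a stationary point of a convex problem over a convex feasible set is a global minimizer. Theorem~\ref{thm:1} already guarantees that Algorithm~\ref{algo:1} converges to a feasible stationary point $x^*$ of the box-constrained problem of minimizing $f$ subject to $l\leq x\leq u$. The feasible box is convex, so the only remaining work is to upgrade stationarity to global optimality using convexity; no further analysis of the algorithm's dynamics is required.

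First I would make explicit what feasible stationarity means in terms of the working-set construction in Equation~\eqref{eq:ws}. At the limit point $x^*$, for each coordinate $i$ exactly one situation occurs: either $i$ is free and $\nabla f(x^*)_i=0$; or $x^*_i=l_i$ and $\nabla f(x^*)_i\geq 0$; or $x^*_i=u_i$ and $\nabla f(x^*)_i\leq 0$. These are precisely the complementarity (KKT) conditions enforced by the stopping criterion $\norm{\nabla f(x^k)[S^k]}\to 0$: the fixed variables are exactly those sitting at a boundary with the gradient pointing out of the feasible box, while the gradient vanishes on the free coordinates.

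Next I would package these coordinate-wise conditions into a single variational inequality. For any feasible $y$ with $l\leq y\leq u$, I check each coordinate of the product $\nabla f(x^*)_i\,(y_i-x^*_i)$: on a free coordinate it vanishes; at a lower bound $y_i-x^*_i\geq 0$ while $\nabla f(x^*)_i\geq 0$; at an upper bound $y_i-x^*_i\leq 0$ while $\nabla f(x^*)_i\leq 0$. In every case the product is nonnegative, so summing over $i$ gives
\[
\sprod{\nabla f(x^*)}{y-x^*}\geq 0 \qquad \text{for all feasible } y.
\]
Invoking convexity, the first-order inequality $f(y)\geq f(x^*)+\sprod{\nabla f(x^*)}{y-x^*}$ holds for all $y$, and combining it with the variational inequality yields $f(y)\geq f(x^*)$ for every feasible $y$, i.e. $x^*$ is a global minimizer over the box.

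The step demanding the most care, and the one I view as the main obstacle, is the faithful translation in the first step: with a fixed $\vareps>0$ the working set excludes variables only within an $\vareps$-interval of a boundary, so the limit point is strictly speaking an $\vareps$-approximate stationary point rather than an exact one. The clean statement therefore either drives $\vareps$ to $0$ along the iterations or interprets ``stationary point'' in this idealized limit, after which the complementarity conditions become exact and the convexity argument above delivers global optimality immediately. Once this reconciliation is made precise, the rest of the proof is routine.
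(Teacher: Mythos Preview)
Your proposal is correct and matches the paper's intended approach: the paper states the corollary with no separate proof, treating it as an immediate consequence of Theorem~\ref{thm:1} together with the standard fact that a feasible stationary point of a convex problem over a convex set is a global minimizer. Your write-up simply makes this standard step explicit (KKT/variational-inequality formulation plus the first-order convexity inequality), and your caveat about the fixed $\vareps$ yielding only approximate stationarity is a fair observation that the paper itself glosses over.
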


Algorithm~\ref{algo:1} uses the \texttt{projectDirection} subroutine (Algorithm~\ref{algo:2}). It becomes apparent from the proof that one can skip the projection branch (Line~\ref{line:1}) and instead always follow the modified quasi-Newton direction and still obtain convergence guarantees. However, here, we use a projection as it was similarly suggested in~\cite{MoralesN11} which often reduces the number of iterations in the original L-BFGS-B algorithm~\cite{ByrdLNZ95}. 

\section{Complete Framework} \label{sec:framework}
In the previous section we have described our approach for solving optimization problems with box constraints that can be efficiently run on a GPU. We extend this approach to also handle arbitrary constraints by using an augmented Lagrangian approach. This extension allows to solve constrained optimization problems of the form
\begin{equation} \label{eq:constrained}
\min_{x}  f(x) \:\,\st\:\, h(x) = 0,\, g(x) \leq 0, \textrm{ and } l \leq x \leq u,
\end{equation}
where $x\in\setR^n$, $f\colon \setR^n \to \setR$, $h\colon\setR^n\to\setR^m$, $g\colon\setR^n\to\setR^p$ are differentiable functions, and the equality and inequality constraints are understood component-wise.

The augmented Lagrangian of Problem~\eqref{eq:constrained} is the
following function
\begin{equation}
\label{eq:augLag}
\begin{aligned}
  L (x, \lambda, \mu, \rho) = f(x) & + \frac{\rho}{2}
  \norm{h(x)+\lambda /\rho}^2 \\
  & + \frac{\rho}{2} \norm{\left(g(x)
    + \mu /\rho\right)_+}^2,
\end{aligned}
\end{equation}
where $\lambda\in\setR^m$ and $\mu\in\setR_{\geq 0}^p$ are Lagrange
multipliers, \mbox{$\rho >0$} is a constant, and $(v)_+$ denotes $\max\{v, 0\}$. The Lagrange multipliers are also referred to as dual variables.

The augmented Lagrangian Algorithm (see Algorithm~4 in the appendix) runs in iterations. In each iteration it minimizes the augmented Lagrangian function, Eq.~\eqref{eq:augLag}, subject to the box constraints using Algorithm~\ref{algo:1} and updates the Lagrange multipliers $\lambda$ and $\mu$. If Problem~\eqref{eq:constrained} is convex, the augmented Lagrangian algorithm returns a global optimal solution. Otherwise, it returns a local optimum~\cite{Bertsekas99}. 

\noindent
\begin{minipage}{0.5\columnwidth}
We integrated our solver with the modeling framework GENO presented in~\cite{LaueMG2019} that allows to specify the optimization problem in a natural, easy-to-read modeling language, see for instance the example to the right. Based on the matrix and tensor calculus methods presented in~\cite{LaueMG2018,LaueMG2020}, \hfill the
\end{minipage}
\hfill
\begin{minipage}{0.4\columnwidth}
  \begin{Verbatim}[frame=single]
parameters
  Matrix A
  Vector b
variables
  Vector x 
min 
  norm2(A*x-b)
st
  sum(x) == 1
  x >= 0
  \end{Verbatim}
\vfill
\end{minipage}
framework then generates Python
code that computes function values and gradients of the objective function and the constraints.
The code maps all linear expressions to NumPy statements. Since any NumPy-compatible library can be used within the generated code this allows us to replace NumPy by CuPy to run the solvers on the GPU. We extended the modeling language and the Python code generator to our needs here. An interface can be found at \href{https://www.geno-project.org}{\texttt{https://www.geno-project.org}}.

Our framework and solvers are solely written in Python, which makes them easily portable as long as NumPy-compatible interfaces are available. Here, we use the CuPy library~\cite{cupy} in order to run the generated solvers on the GPU. 
The code for the solver is available the github repository \href{https://www.github.com/slaue/genosolver}{\texttt{https://www.github.com/slaue/genosolver}}.

\ignore{
\section{Limitations}
The solvers generated by our approach can solve constrained optimization problems. For convex problems, they return the global optimal solution. In the non-convex case however, only a local minimum is returned. Also, they do not make use of any special structure of the problem. Hence, solvers that are  specifically designed to solve the specified problem can be more efficient. Since the whole problem needs to fit into memory, a limiting factor can be the GPU RAM which is usually less than the CPU RAM.
}

 \begin{figure*}[t]
  \centering
  \includegraphics[width=0.24\textwidth]{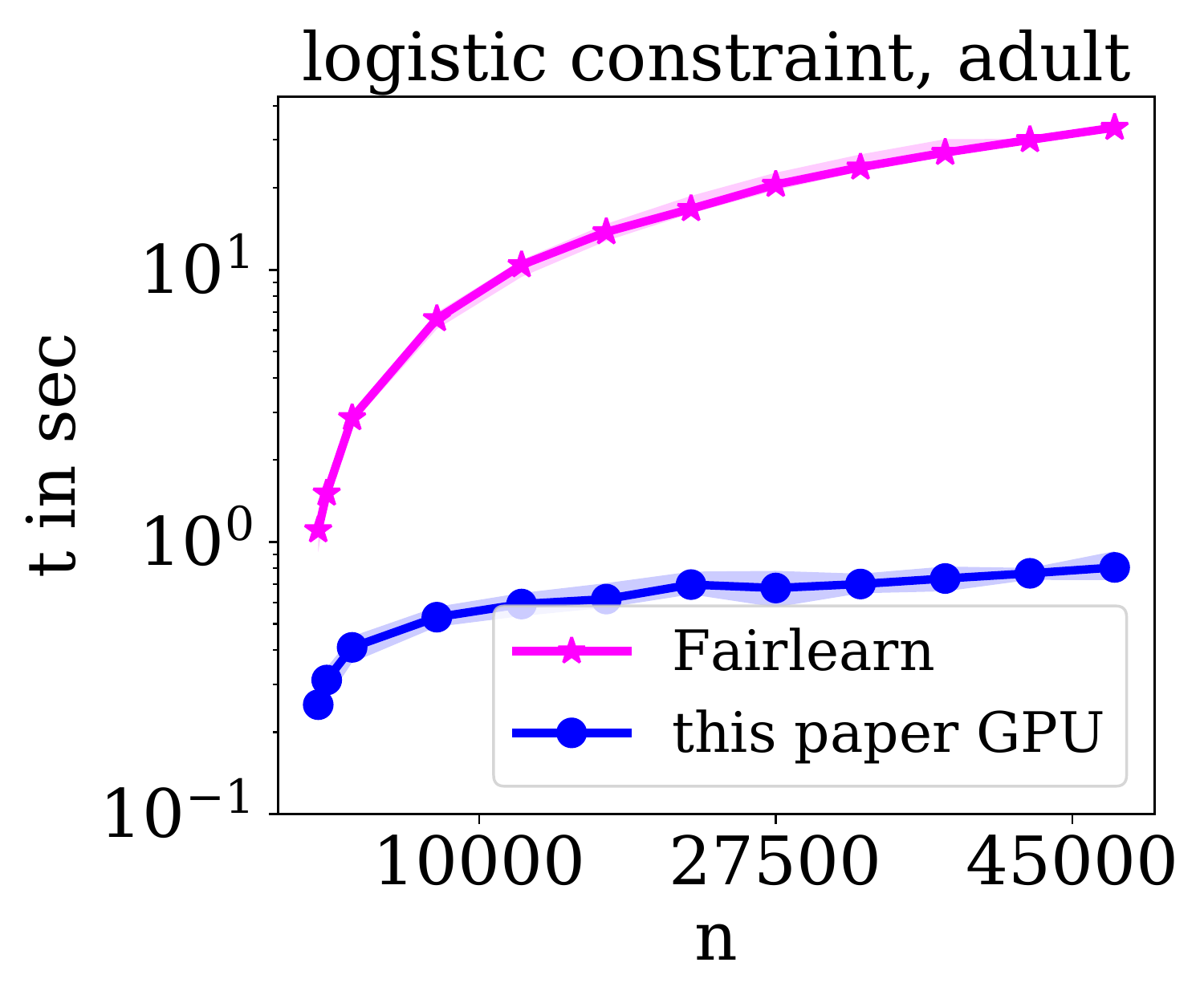}
  \includegraphics[width=0.24\textwidth]{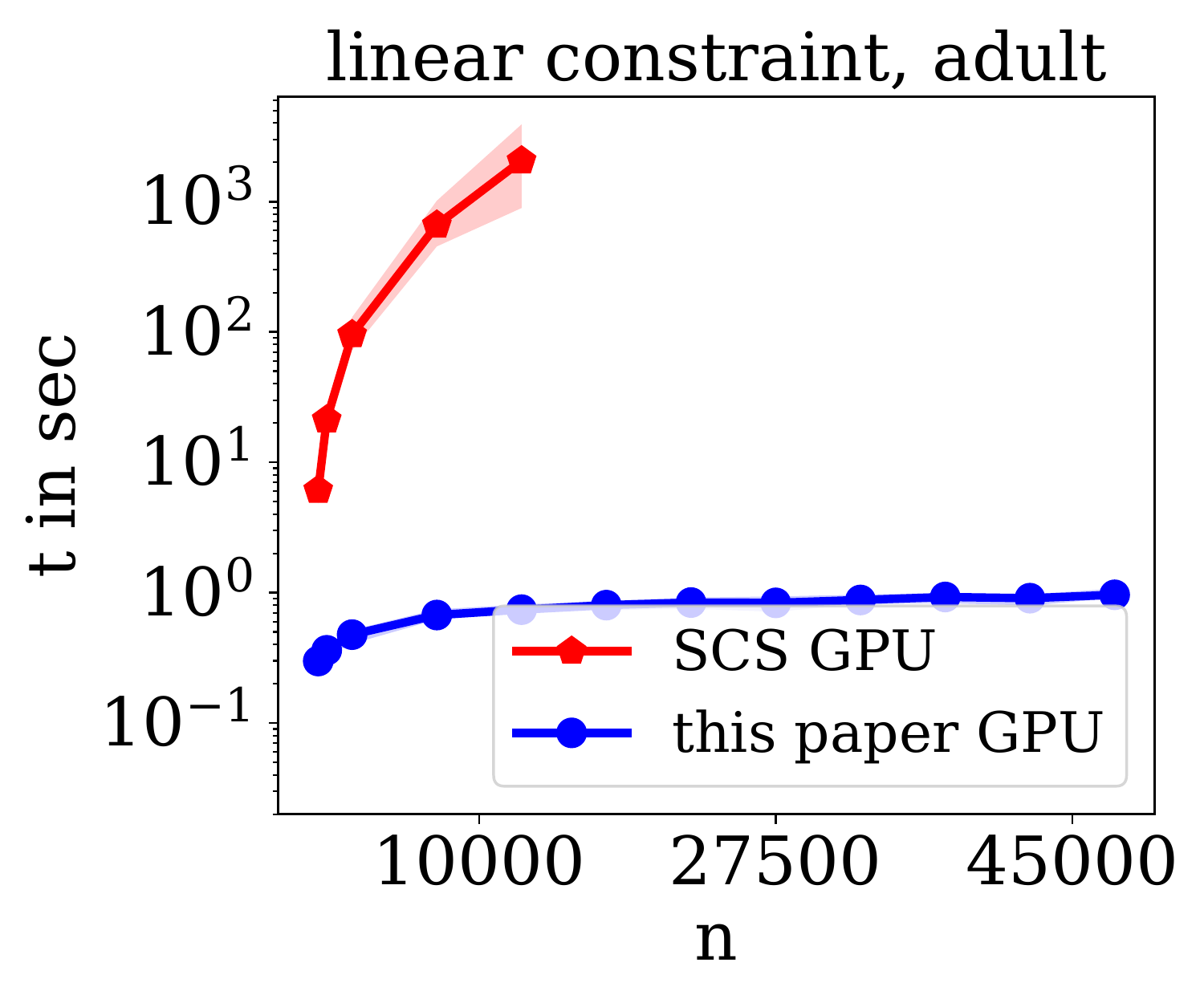}
  \includegraphics[width=0.24\textwidth]{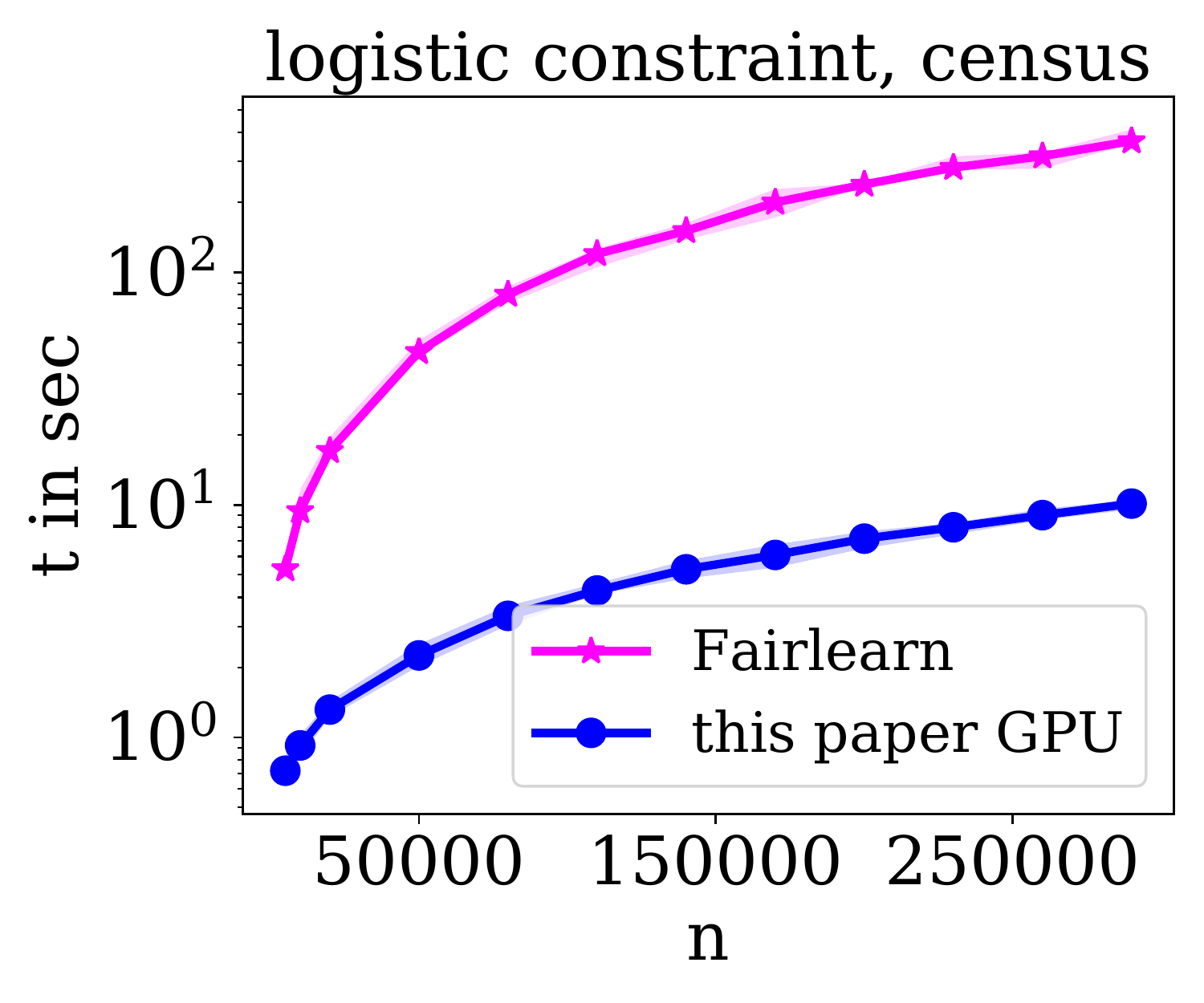}
  \includegraphics[width=0.24\textwidth]{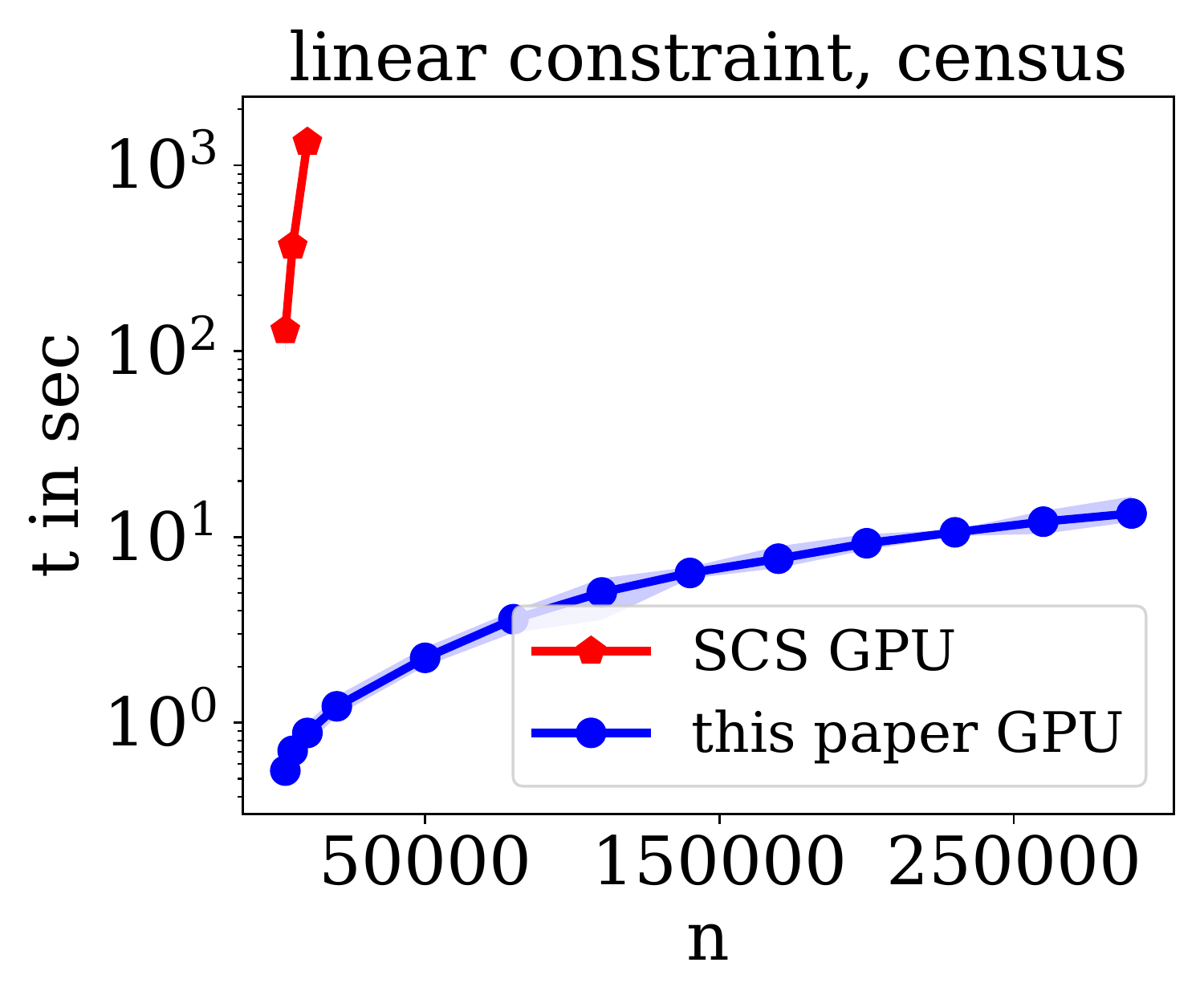}
  \caption{Running times for the logistic regression problem with fairness constraints. The two plots on the left show the running times for the adult data set and the two plots on the right for the census data set. For each data set, one plot shows the running times when the logistic loss is used in the fairness constraint and one plot when the linear loss is used.}
  \label{fig:fairlearn}
\end{figure*}

\section{Experiments} \label{sec:experiments}
The purpose of the following experiments is to show the efficiency of our approach on a set of different classical, that is, non-deep, machine learning problems. For that purpose, we selected a number of well-known classical problems that are given as constrained optimization problems, where the optimization variables are either vectors or matrices. All these problems can also be solved on CPUs. 
In the supplemental material, we provide results that show that this GPU version of the GENO framework significantly outperforms the previous, efficient multi-core CPU version of the GENO framework~\cite{LaueMG2019}.
Here, we compare our framework on the GPU to CVXPY paired with the cuOSQP and SCS solvers. CVXPY has a similar easy-to-use interface and also allows to solve general constrained optimization problems. Note however, that CVXPY is restricted to convex problems and cuOSQP to convex quadratic problems. It was shown that cuOSQP outperforms its CPU version OSQP by about a factor of ten~\cite{SchubigerBL20}. Our experiments confirm this observation.

To the best of our knowledge, pairing CVXPY with \mbox{cuOSQP} or SCS \emph{are the only two approaches comparable to ours.} 
Another framework that has been released recently that can solve convex, constrained optimization problems on the GPU is cvxpylayers~\cite{AgrawalABBDK19}. However, its focus is on making the solution of the optimization problem differentiable with respect to the input parameters for which it needs to solve a slightly larger problem. Internally, it uses CVXPY combined with the SCS solver in GPU-mode. Hence, this framework is slower than the original combination of CVXPY and SCS. 

In all our experiments we made sure that the solvers that were generated by our framework always computed a solution that was \emph{better} than the solution computed by the competitors in terms of objective function value and constraint satisfaction. All experiments were run on a machine equipped with an Intel i9-10980XE 18-core processor running Ubuntu 20.04.1 LTS with 128 GB of RAM, and a Quadro RTX 4000 GPU that has 8 GB of GDDR6 SDRAM and 2304 CUDA cores. Our framework took always less than 10 milliseconds for generating a solver from its mathematical description. 
\nocite{GiesenL16,FunkeLS17,FunkeLS16}

 \begin{figure*}[t]
  \centering
  \includegraphics[width=0.32\textwidth]{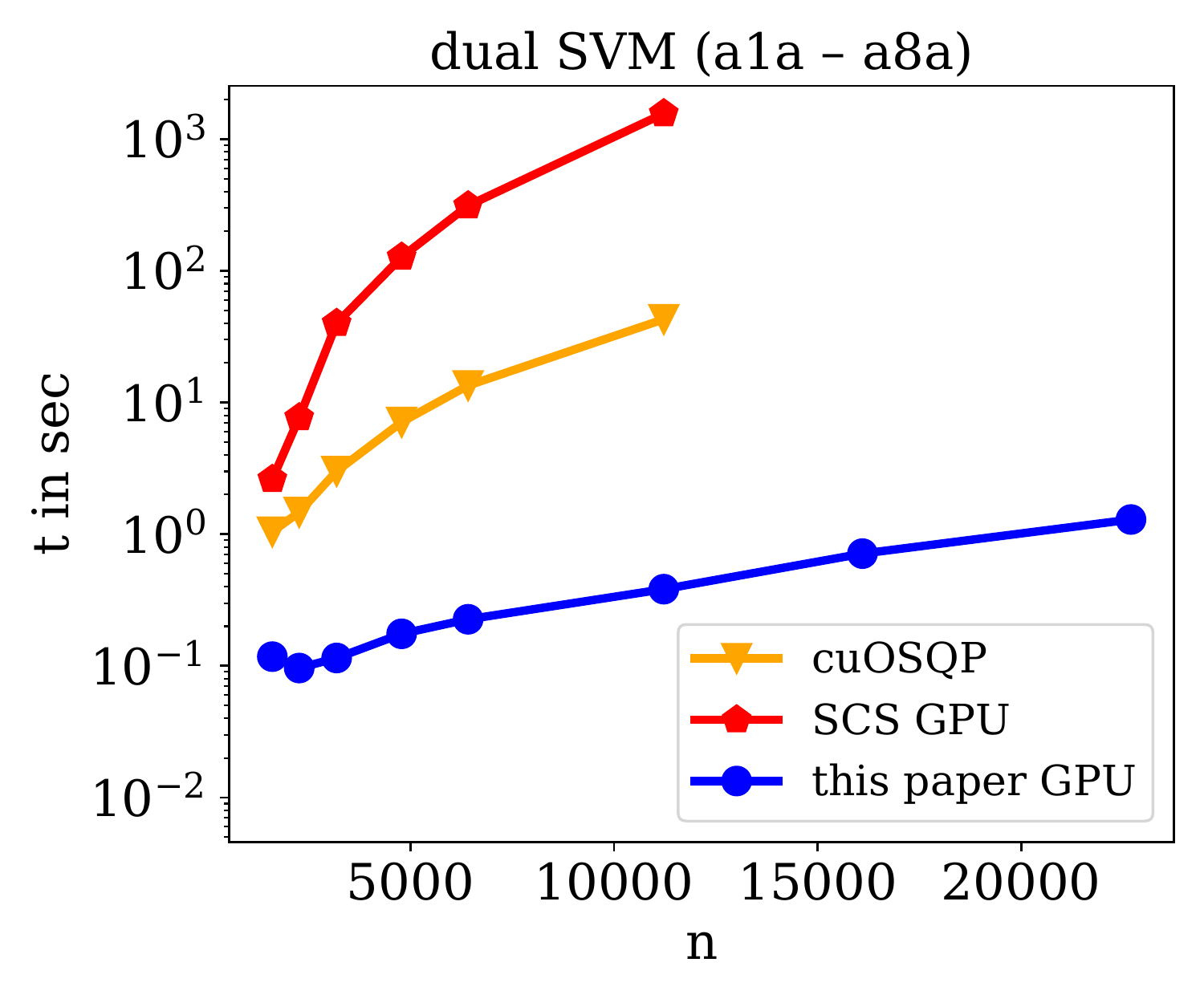}
  \includegraphics[width=0.32\textwidth]{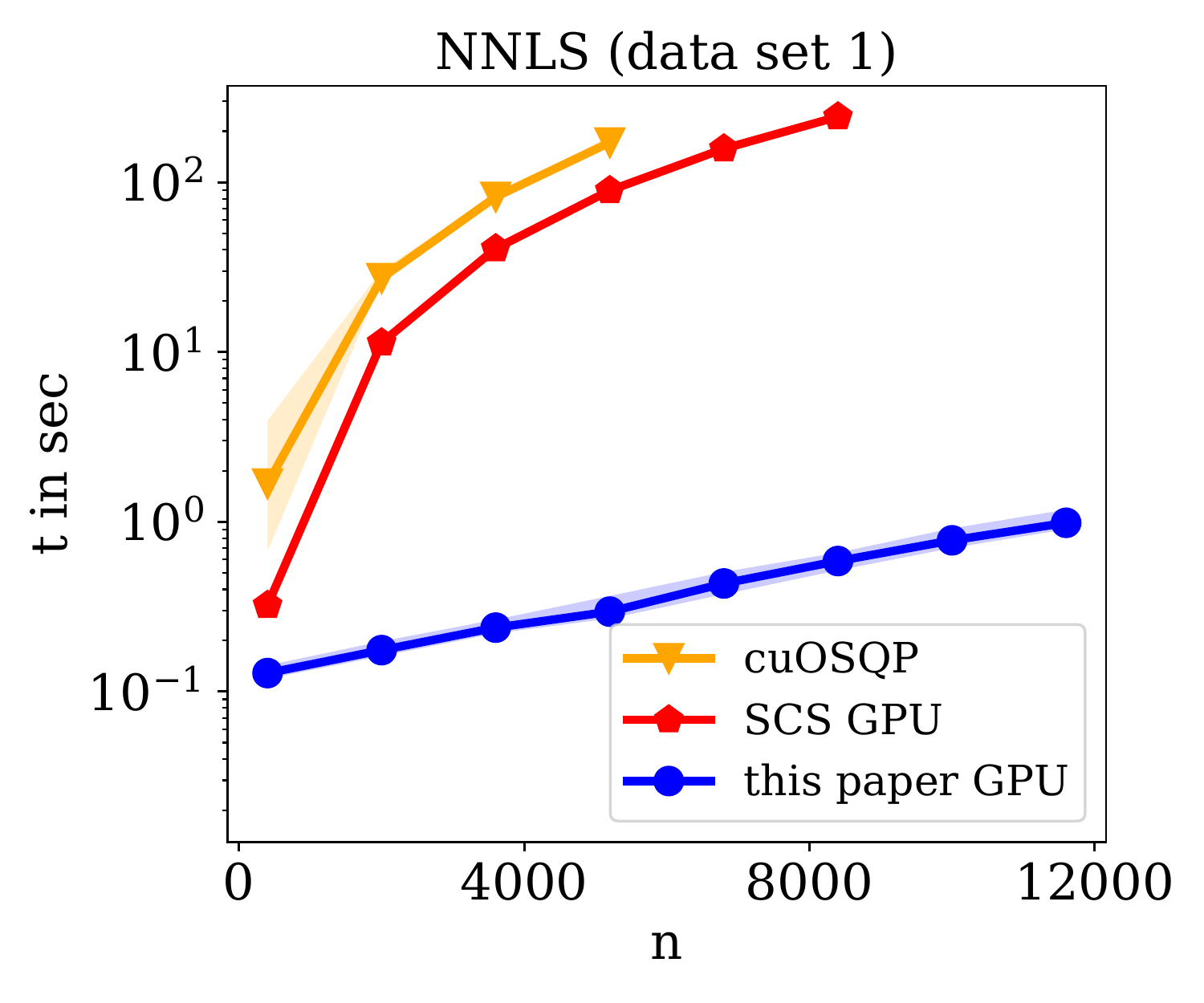}
  \includegraphics[width=0.32\textwidth]{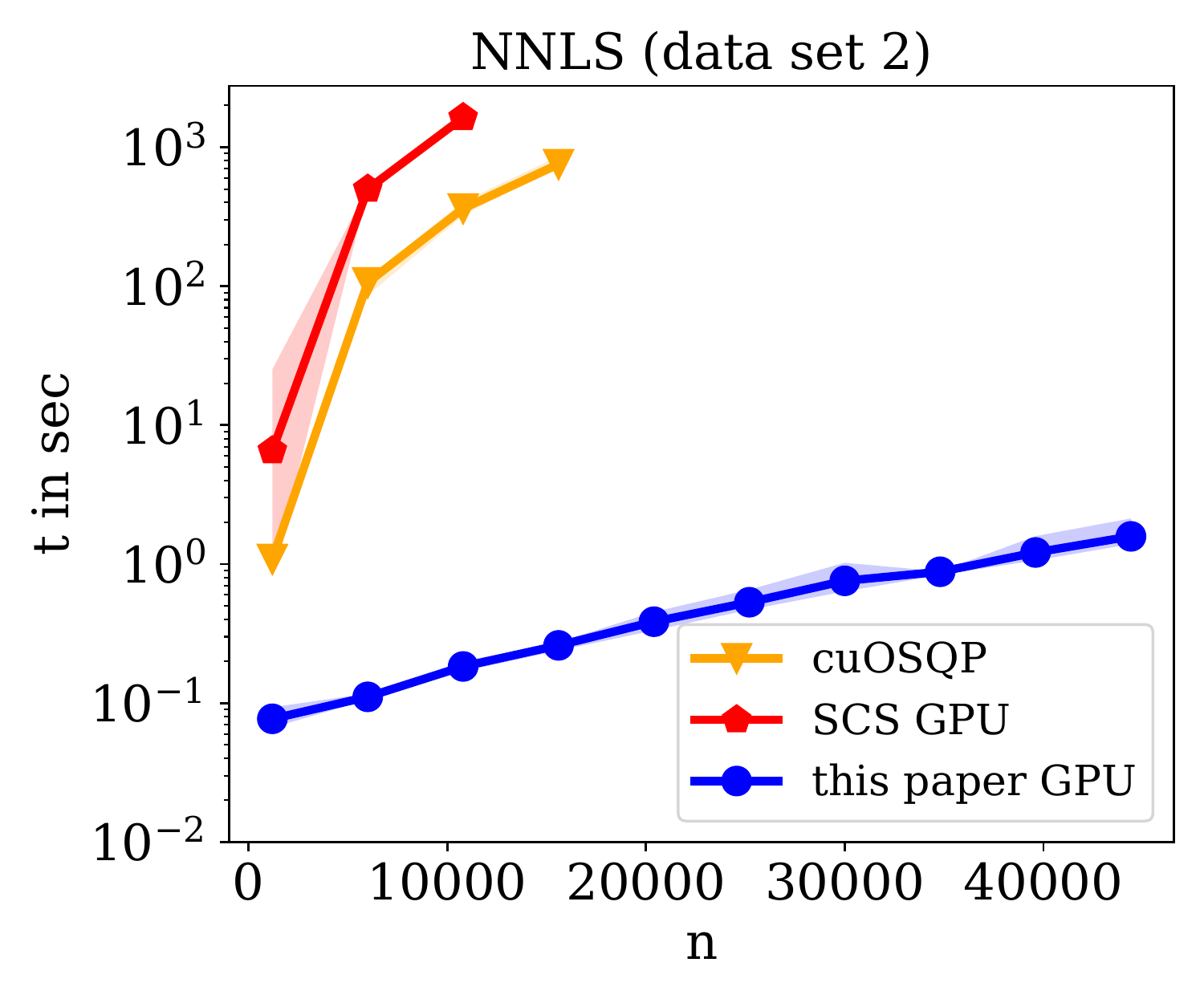}
  \caption{
The plot on the left shows the running times for the SVM problem on the adult data set. The plot in the middle and the plot on the right show the running times for the non-negative least squares problem when run on the first and second data set, respectively.}
  \label{fig:SVM_NNLS}
\end{figure*}

\begin{table*}[h!]
  \centering
  \begin{tabular}{l*{8}{r}}
    \toprule
    \multirow{2}{*}{Solver} & &&&{Data sets} \\
    \cmidrule{2-8}
    & cod-rna & covtype & ijcnn1 & mushrooms & phishing & a9a & w8a \\
    \midrule
    this paper GPU & 0.6 & 0.1 & 1.8 & 0.1 & 1.6 & 0.3 & 1.4 \\
    cuOSQP & 55.7 & failed & 206.5 & 22.0 & 163.5 & 32.8 & 36.1 \\
    SCS GPU & 2342.0 & 31.3 & N/A & 7994.9 & N/A & 1094.0 & 1227.1 \\
    \bottomrule
  \end{tabular}
  \caption{Running times in seconds for the dual SVM problem. All data sets were subsampled to $10,000$ data points due time and memory requirements of the cuOSQP and SCS solver. N/A indicates that the solver did not finish within $10,000$ seconds.}
\label{tab:SVM}
\end{table*}

\subsection{Fairness in Machine Learning}
In classical machine learning approaches, one usually minimizes a regularized empirical risk in order to make correct predictions on unseen data. However, due to various causes, e.g., bias in the data, it can happen that some group of the input data is favored over another group. 
Such favors can be mitigated by the introduction of  constraints that explicitly prevent them. This is the goal of fairness in machine learning which has gained considerable attention over the last few years. Here, we consider fairness for binary classification.

There are a number of different fairness definitions~\cite{AgarwalBD0W18,barocas-hardt-narayanan,DoniniOBSP18}, see the Fairlearn project~\cite{bird2020fairlearn, fairlearn} for an introduction and overview. Here, we follow the exposition and formulation in~\cite{DoniniOBSP18}. Let $D=\{(x^1, y^1), \ldots, (x^m, y^m)\}$ be a labeled data set and $A$ and $B$ be two groups, i.e., subsets of the data set. Then, one seeks to find a classifier that is statistically independent of the group membership $A$ and $B$. 
Depending on the type of groups $A$ and $B$, respectively, different types of fairness constraints are obtained. Since statistical independence is defined with respect to the true data distribution, which is typically unknown, one replaces the expectation over the true distribution by the empirical risk.
Hence, one solves the following constrained optimization problem
\[
\min_{f}\: \widehat L_D(f) + \lambda \cdot r(f)\\
  \quad \st\quad \widehat L_A (f) = \widehat L_B (f),
\]
where $f\colon X\to\setR$ is a function or model, $l\colon \setR\times Y\to \setR$  is a loss function, $\widehat L_D(f) = \frac{1}{|D|} \sum_{(x^i, y^i) \in D} l(f(x^i), y^i)$ is the empirical risk of $f$ over the data set $D$, and $r(\cdot )$ is the regularizer. 

\ignore{Here, we consider demographic parity which is also known as statistical parity and follow the exposition in~\cite{}. It can be defined as follows. Let $D = \{(x^1, y^1), \dots , (x^m, y^m)\}$ be a set of $m$ binary labeled data points, where  $X=\setR^n$ is the input space and $Y = \{-1, +1\}$ is the set of binary output labels. Suppose there are two groups $A, B\subseteq D$. A classifier $f\colon X\to\setR$ satisfies demographic parity under a data distribution over $X\times Y$ if its risk is statistically independent of the group membership $A$ or $B$. In practice, one replaces the expectation over the true distribution by the empirical risk. Hence, one solves the following constrained optimization problem
\[
\begin{array}{rl}  \displaystyle
  \min_{f} &  \hat L_D(f) + \lambda \cdot r(f)\\
  \st &  \hat L_A (f) = \hat L_B (f)
  	\end{array}
\]
where $f\colon X\to\setR$ is a function or model, $l\colon \setR\times Y\to \setR$  is a loss function, and $\hat L_D(f) = \frac{1}{m} \sum_{i=1}^m l(f(x^i), y^i)$ is the empirical risk of $f$ which serves as a proxy to the true risk, i.e., the expectation of the loss function with respect to the true data distribution.
}
\ignore{
 It can be defined as follows. Let $D = \{(x^1, s^1, y^1), \dots , (x^m, s^m, y^m)\}$ be a $m$ data samples from $X \times S\times Y$, where  $X=\setR^n$ is the input space, $Y = \{-1, +1\}$ is the set of binary output labels, and $S = \{a, b\}$ represents group membership among two groups, e.g., `female' or `male'. Note, that the sensitive feature $S$ is allowed also to be part of the input space $X$. Let $D^g=\{(x^i, s^i, y^i) | s^i=g\}$ for $g\in\{a, b\}$, i.e., the input data split into two subgroups according to their sensitive feature. 
Let $f\colon X\to\setR$ be a function or model and let $l\colon \setR\times Y\to \setR$  be a loss function. The empirical risk of $f$ is defined as $\hat L(f) = \frac{1}{m} \sum_{i=1}^m l(f(x^i), y^i)$ which serves as a proxy to the true risk, i.e., the expectation of the loss function with respect to the true data distribution. In classical machine learning one usually tries to find a model $f\in\cal F$ that minimizes the empirical risk or the regularized empirical risk. Under independence and bounded complexity assumptions it can be shown that such a model will generalize well to new, unseen data.

There are a number of different fairness definitions~\cite{AgarwalBD0W18, DoniniOBSP18}. Here, we consider demographic parity which is also known as statistical parity. A classifier $f$ satisfies demographic parity under a distribution over $X\times S\times Y$ if its prediction $f(x)$ is statistically independent of the sensitive feature $s$, i.e., $\E[f(x) | s = a] = \E[f(x) | s = b] = \E[f(x)]$. In practice, one replaces the expectation over the true distribution by the empirical risk. Hence, one solves the following constrained optimization problem
\[
\min_{f}\, \hat L(f) + \lambda \cdot r(f)\quad\st\quad   \hat L^a (f) = \hat L^b (f).
\]
}
Ideally, one would like to use the same loss function for the risk minimization $\widehat L_D(f)$ as in the fairness constraint $\widehat L_A(f) = \widehat L_B (f)$. The logistic loss is often used for classification. However, when the logistic loss is used in the fairness constraint, the problem becomes non-convex, even for a linear classifier. Using our framework, we can still solve this problem. However, we cannot compare its performance to cuOSQP or SCS since they only allow to solve convex problems. Thus, we compare it to the exponentiated gradient approach~\cite{AgarwalBD0W18} paired with the Liblinear solver~\cite{FanCHWL08}. 
Note, that this approach does not run on the GPU. However, to provide a better global picture, we still include it here. 
Only when the loss function in the fairness constraint is linear as in~\cite{DoniniOBSP18, ZemelWSPD13}, the problem becomes convex. We also consider this case and compare it to SCS. Note, the problem cannot be solved by cuOSQP since it contains exponential cones. 

We used the same setup, the same data sets, and the same preprocessing as described in the Fairlearn package~\cite{bird2020fairlearn}. We used the adult data set ($48,842$ data points with $120$ features) and the census-income data set ($299,285$ data points with $400$ features) each with `female' and `male' as the two subgroups. For each experiment, we sampled $m$ data points from the full data set. Figure~\ref{fig:fairlearn} shows the running times. Our framework provides similar results in terms of quality as the exponentiated gradient approach, when the logistic loss is used in the fairness constraint, and it is orders of magnitude faster than SCS on the GPU, when the linear loss is used in the fairness constraint.

\subsection{Dual SVM}

Support vector machines (SVMs) are a classical yet still relevant classification method. When combined with a kernel, they are usually solved in the dual problem formulation, which reads as
\[
\min_{a}\, \frac{1}{2} (a\odot y)^\top K (a\odot y) - \|a\|_1
  \, \st\, y^\top a = 0, 0\leq a\leq c,
\]
where $K\in\setR^{n\times n}$ is a positive semidefinite kernel matrix, $y\in \{-1, +1\}^n$ are the corresponding binary labels, $a\in\setR^n$ are the dual variables, $\odot$ is the element-wise multiplication, and $c\in\setR_+$ is the regularization parameter. 

We used all data sets from the LibSVM data sets website~\cite{libsvmData} that had more than $8000$ data points with fewer than $1000$ features such that a kernel approach is reasonable. We applied a standard Gaussian kernel with bandwidth parameter $\gamma=1$ and regularization parameter $c=1$. Table~\ref{tab:SVM} shows the running times for the data sets when subsampled to $10,000$ data points. Figure~\ref{fig:SVM_NNLS} shows the running times for an increasing number of data points based on the original subsampled adult data set~\cite{libsvmData}. It can be seen that our approach outperforms \mbox{cuOSQP} as well as SCS by several orders of magnitude. The cuOSQP solver ran out of memory for problems with more than $10,000$ data points. While there is a specialized solver for solving these SVM problems on the GPU~\cite{thundersvm18}, the focus here is on general purpose frameworks.

 \begin{figure*}[t]
  \centering
  \includegraphics[width=0.32\textwidth]{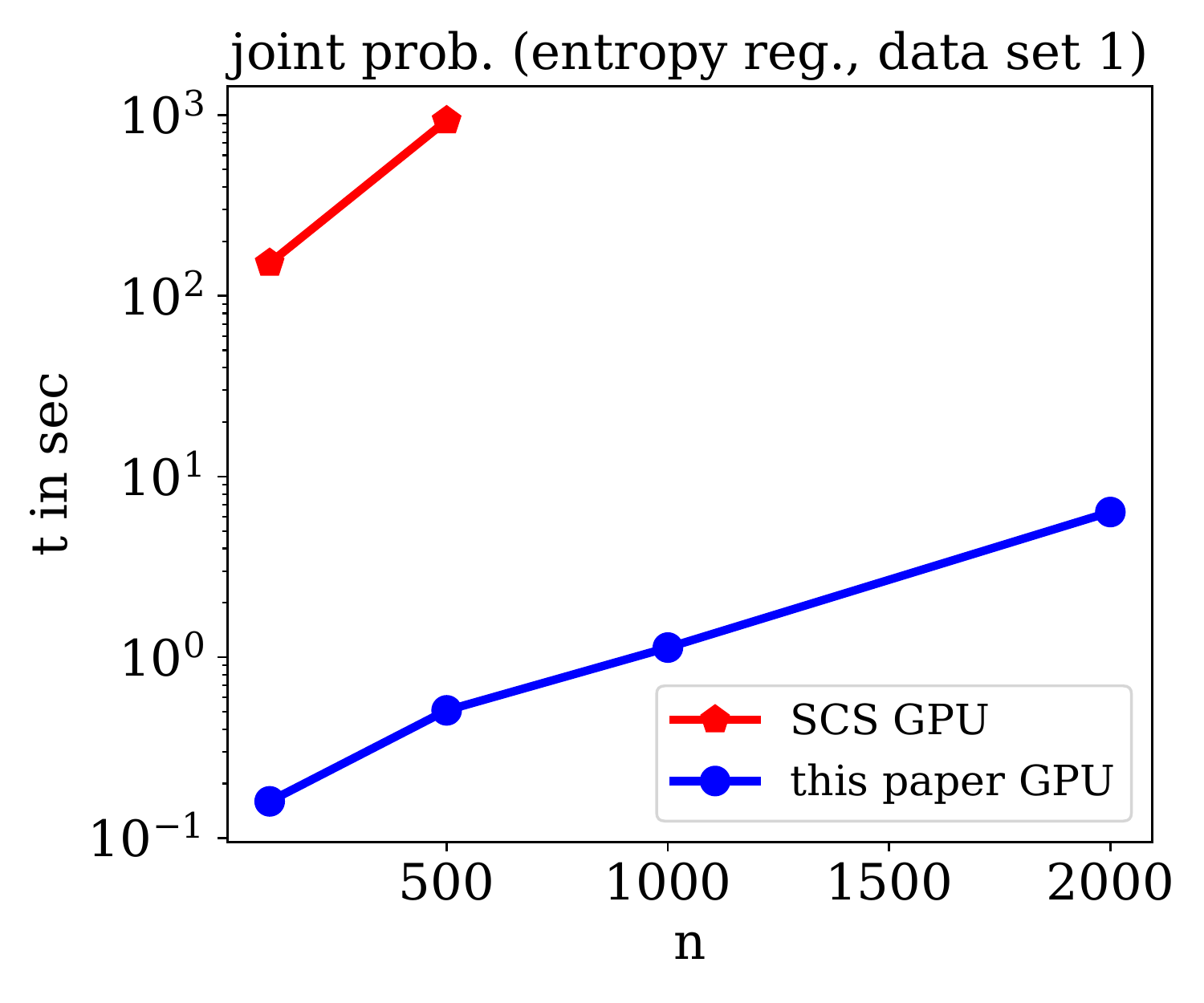}
  \includegraphics[width=0.32\textwidth]{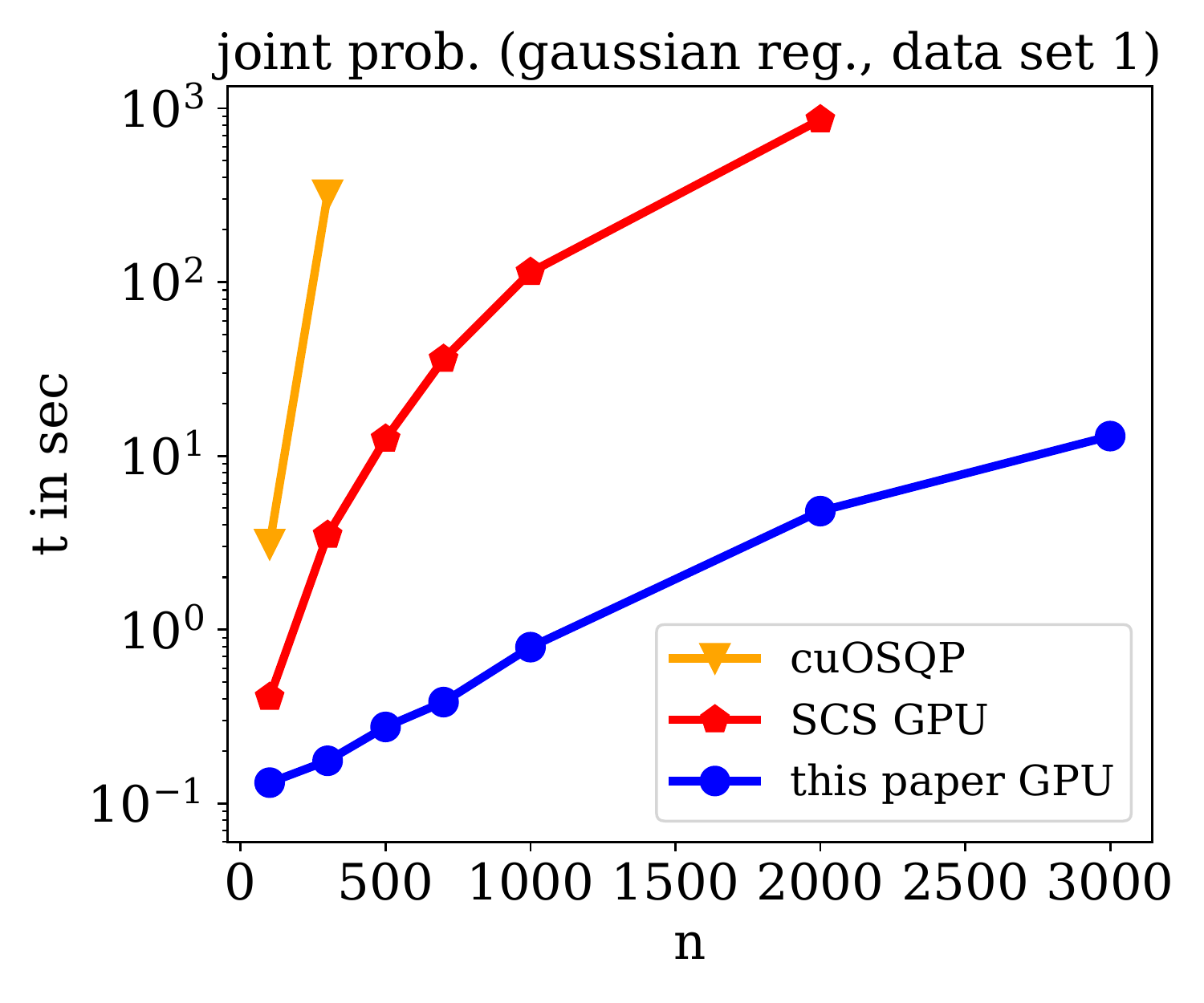}
  \includegraphics[width=0.32\textwidth]{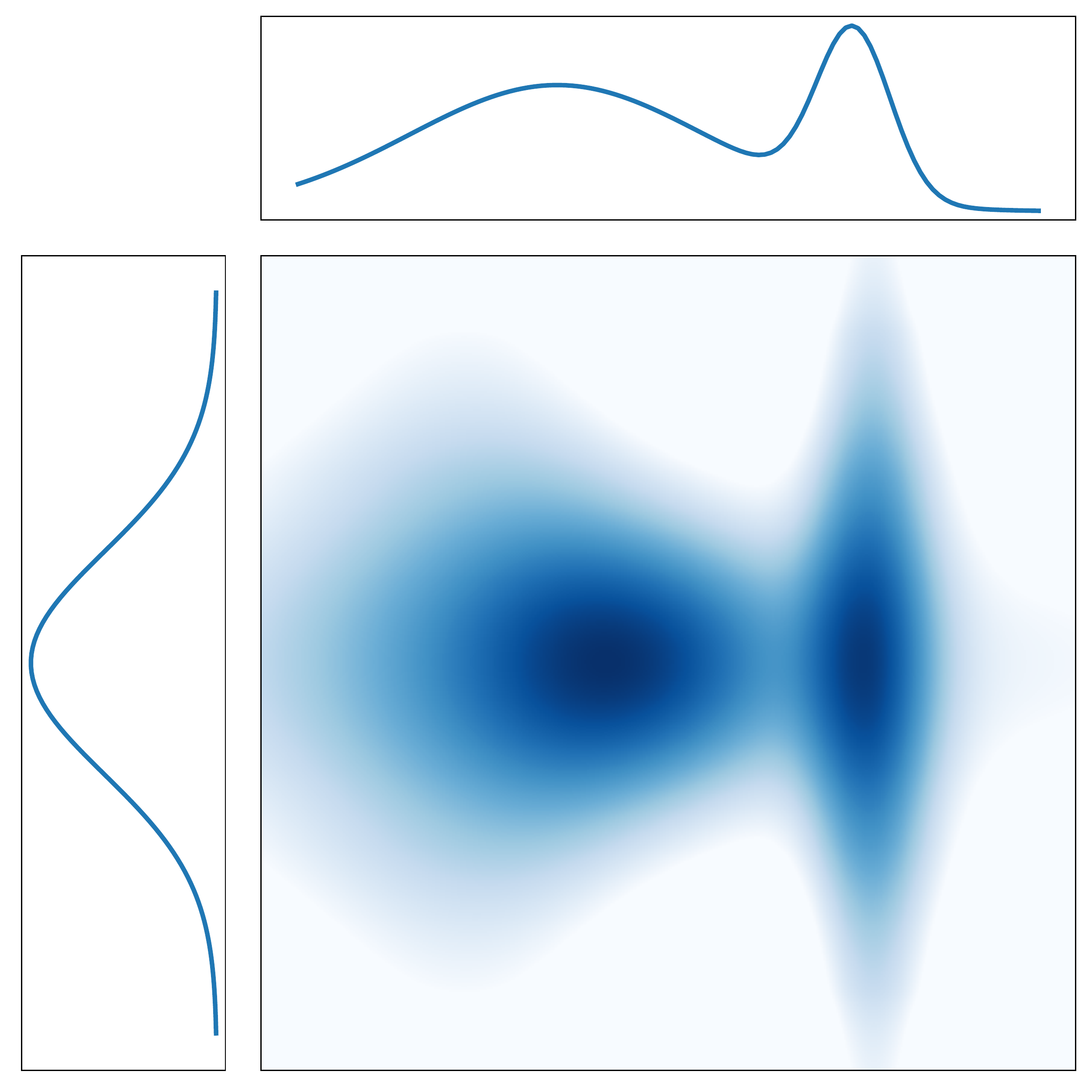}
  \caption{Running times for computing the joint probability distribution from two marginal distributions. The left plot shows the running times when the entropy prior is used and the plot in the middle when a Gaussian prior is used. The right figure visualizes the probabilities.}
  \label{fig:jointProb}
\end{figure*}

\subsection{Non-negative Least Squares}
Non-negative least squares is an extension of the least squares regression problem that requires the output to be non-negative. See~\cite{Slawski13} for an overview on the non-negative least squares problem. It is given as the following optimization problem
\[
\min_{x}\, \norm{Ax-b}_2^2 \quad\st\quad x\geq 0,
\]
where $A\in\setR^{m\times n}$ is the design matrix and
$b\in\setR^m$ the response vector. We ran two sets of
experiments, similarly to the comparisons in~\cite{Slawski13}, where
it was shown that different algorithms behave quite differently on
these problems. For experiment (i), we generated a random data matrix
$A\in\setR^{2000\times 6000}$, where the entries of $A$ were sampled
uniformly at random from the unit interval and a sparse vector
$x\in\setR^{6000}$ with non-zero entries sampled from the standard Gaussian distribution and a sparsity of $0.01$. The response variables 
were then generated as $y = \sqrt{0.003}\cdot Ax + 0.003 \cdot z$,
where $z\sim {\cal{N}}(0, 1)$. For experiment (ii),
$A\in\setR^{6000\times 3000}$ was drawn from a Gaussian distribution
and $x$ had a sparsity of $0.1$. The response variable was generated as
$y=\sqrt{1/6000}\cdot Ax+0.003\cdot z$, where $z\sim {\cal{N}}(0,
1)$. The differences between the two experiments are:
(1) The Gram matrix $A^\top A$ is singular in experiment (i) and
regular in experiment (ii), (2) the design matrix $A$ has isotropic
rows in experiment (ii) but not in experiment (i), and
(3) $x$ is significantly sparser in (i) than in (ii). 
To evaluate the runtime behavior for increasing problem size, we scaled the problem sizes to $A\in\setR^{2000t \times 6000t}$ in the first experiment and to $A\in\setR^{6000t\times 3000t}$ in the second experiment for a parameter $t\in[0, 6]$. For each problem instance we performed ten runs and report the average running time along with the standard deviation in Figure~\ref{fig:SVM_NNLS}. We stopped including SCS and cuOSQP into the experiments once their running time exceeded $1000$ seconds. It can be seen that SCS is faster than cuOSQP on the first set of experiments and slower than cuOSQP on the second set. However, our approach outperforms SCS and cuOSQP by several orders of magnitude in both sets of experiments.

\subsection{Joint Probability Distribution}
Given two discrete probability distributions $u\in\setR^m$ and $v\in\setR^n$, we are interested in their joint probability distribution $P\in\setR^{m\times n}$. 
This problem has been studied intensively before, see, e.g.,~\cite{Cuturi13, FrognerP19, MuzellecNPN17}. With additional knowledge, it can be reduced to a regularized optimal transport problem. Many different regularizers have been used, for instance, an entropy, a Gaussian, or more generally, a Tsallis regularizer. The corresponding optimization problem is the following constrained optimization problem over positive matrices
\[
\min_{P}\: \sprod{M}{P} + \lambda \cdot r(P) \quad \st\quad P\cdot \vecOne = u,\, P^\top \cdot \vecOne = v,\, 0\leq P,
\]
where $M\in\setR^{m\times n}$ is the cost matrix, $r(.)$ is the regularizer, $\vecOne$ is the all-ones vector, and $\lambda\in\setR_+$ is the regularization parameter.
In our experiments we used the entropy and the Gaussian regularizer that are both special cases of the Tsallis regularizer. In the special case that the regularizer is the entropy, $m=n$, and the cost matrix $M$ is a metric, \cite{Cuturi13} showed that the problem can be solved using Sinkhorn's algorithm~\cite{Sinkhorn67}. Similar results are known for other special cases~\cite{JanatiMPC20}. However, in general  Sinkhorn's algorithm cannot be used as it is the case in the present experiments, since the cost matrix is not a metric.

Here, we used synthetic data sets since the real-world data sets that are usually used for this task are very small, typically $m, n \leq 20$. We created two sets of synthetic data sets. For the first set of data sets, we let a Gaussian and a mixture of two Gaussians be the marginals, see Figure~\ref{fig:jointProb}. Then, we discretized both distributions to obtain the marginal vectors $u$ and $v$. In this case, we set $m=n$. Hence, when $n=1000$, the corresponding optimization problem has $10^6$ optimization variables with lower bound constraints and $2000$ equality constraints. The cost matrix $M$ was fixed to be the discretized version $uu^\top$ of a two-dimensional Gaussian, and the regularization parameter was set as $\lambda=\frac{1}{2}$. We ran two sets of experiments on this data set, one where $r(.)$ is the entropy regularizer and another one with the Gaussian regularizer. Figure~\ref{fig:jointProb} shows the running times for both experiments for varying problem sizes. It can be seen that our approach outperforms cuOSQP and SCS by several orders of magnitude. The cuOSQP solver ran out of memory already on very small problems. The second data set was created as in~\cite{FrognerP19}. On this data set, the speedup over cuOSQP and SCS is even more pronounced. Detailed results can be found in the appendix. 
 
\section{Conclusion}
We presented an approach for solving constrained optimization problems on the GPU efficiently and included this approach into the GENO framework.
The framework allows to specify a constrained optimization problem in an easy-to-read modeling language  and then generates solvers in portable Python code that outperform competing state-of-the-art approaches on the GPU by several orders of magnitude. Using GPUs also for classical, that is, non-deep, machine learning becomes increasingly important as hardware vendors 
started to combine CPUs and GPUs on a single chip like Apple's M1 chip. 


\section*{Acknowledgments}
This work was supported by the German Science Foundation (DFG) grant (GI-711/5-1) within the priority program (SPP~1736) \emph{Algorithms for Big Data} and by the Carl Zeiss Foundation within the project \emph{A Virtual Werkstatt for Digitization in the Sciences}.
 
\ignore{
\section*{Broader Impact and Limitations}
The presented algorithm and framework allows to solve constrained optimization problems more efficiently than before. Because it directly targets GPUs, which are more energy-efficient than traditional CPUs, it can reduce the carbon footprint of classical, optimization based machine learning problems. The gains in time and energy efficiency could be spent on searching for alternative problem formulations that can result in better models. Such a search is facilitated by our framework that makes it fast and easy to generate solvers for the modified problems. Furthermore, our approach allows to include constraints that have not been considered before in traditional machine learning formulations and for which no efficient implementations had been available. Thus, our framework allows more flexibility in modeling machine learning problems, which could be  beneficial in the area of fairness in machine learning. 

We do not foresee any direct negative societal impact of our framework. While our framework increases the availability of efficient solvers for various machine learning problems, it does so in a transparent way. The running times of the generated solvers are fast compared to other state-of-the-art approaches. However, we cannot guarantee that this is always the case. Therefore, some caution should be exercised when integrating our solvers into time-critical applications.

The solvers generated by our approach can solve constrained optimization problems. For convex problems, they return the global optimum. 
In the non-convex case, only a local minimum is returned. Our solvers do not exploit any special structure of a given problem. Hence, solvers that are specifically designed for a given problem can be more efficient. Finally, since the whole problem needs to fit into main memory, a limiting factor can be the GPU RAM, which is usually less than the CPU RAM.
}
\bibliography{gpugeno}

\appendix
\onecolumn
\newpage
\newgeometry{left=35mm,right=35mm,top=35mm,bottom=30mm}
\section{Appendix}
In this supplementary material, we provide more details on the original L-BFGS-B algorithm, missing algorithmic details of our approach, and more experiments. The focus of the main paper is to provide an efficient algorithmic framework for solving constrained optimization problems on the GPU. However, to provide a better overall picture, we also provide comparisons for a CPU version of our approach. We compare the CPU version of our approach to its GPU version and also to the original GENO framework~\cite{LaueMG2019}, which specifically targets CPUs. Comparing running times across CPUs and GPUs is not always fair, since computations on different hardware cannot be directly compared. In order to ensure approximately fair comparisons, we used a CPU and a GPU that cost about the same. Namely, we used an  Intel i9-10980XE 18-core CPU and a Quadro RTX 4000 GPU with 2304 CUDA cores.

\section{Comparison to Original L-BFGS-B on the GPU}

In this section, we further discuss the algorithmic shortcomings of the original L-BFGS-B when ported to the GPU and provide further technical details, as already mentioned in the introduction of the main paper.

The original L-BFGS-B algorithm, like any other quasi-Newton method, approximates the function to be minimized by a quadratic model. In each iteration, the algorithm first computes the set of fixed variables, i.e., the optimization variables that are on their bounds. This is achieved as follows: The quadratic model is minimized along the path which starts at the current iterate and points into the direction of the gradient. There are two possibilities: Either the minimum is attained at the current path segment or the path first hits the boundary of the feasible region. In the first case, the minimum along the projected gradient path is found. This point is called the Cauchy point. In the second case, the path is projected back onto the feasible region and the minimization along the projected path continues. Pseudo-code and explicit formulas can be found in~\cite{ByrdLNZ95}. The main observation is that computing the minimum of the quadratic approximation along a ray only needs a few scalar operations. Whenever the path hits the boundary, the ray changes direction and the minimum along this new ray needs to be computed. This loop is repeated until the minimum is found. The total number of iterations of this loop is usually roughly of the order of the number of variables. This is an inherent sequential part that cannot benefit from parallelization. While this loop is executed on the CPU as well as on the GPU only on one core, its running time increases drastically on the GPU since its cores are much weaker. This problem is reflected in experiments and it was the main reason for designing our new algorithm. 

Consider for instance the non-negative least squares problem, as described in the experiments section of the main paper. The absolute error of $10^{-10}$ was similar for both solvers and the number of iterations (between 30-40) was identical for both solvers on each data set. As can be seen from the results in Table~\ref{tab:nnls_0} the original L-BFGS-B algorithm even slows down on the GPU. The reason is the bottleneck of the Cauchy point computation. Our approach does not suffer from the problem and parallelizes nicely on the GPU.

\begin{table}[h!]
\begin{center}
  \small
  \begin{tabular}{l*{9}{r}}
    \toprule
   problem size  & \multicolumn{2}{c}{6000} & \multicolumn{2}{c}{8000} & \multicolumn{2}{c}{10,000} & \multicolumn{2}{c}{12,000} \\
    & total time & CP  time & total time & CP time & total time & CP time & total time & CP time\\
    \midrule
    L-BFGS-B CPU & 1.5 & 0.3 & 2.6 & 0.4 & 3.6 & 0.5 & 4.9 & 0.6\\
    L-BFGS-B GPU & 2.8 & 2.5 & 3.5 & 3.1 & 4.1 & 3.6 & 5.2 & 4.6\\
    this paper GPU & 0.3 &     & 0.4 &      & 0.5 &      & 0.8 &\\
    \bottomrule
  \end{tabular}
  \caption{Detailed running time comparison: depicted is the total running time in seconds as well as the time in seconds spent for the Cauchy point computation (CP) for the non-negative least squares problem for varying problem sizes. Larger problem sizes do not fit into the GPU RAM anymore.}
  \label{tab:nnls_0}
  \end{center}
\end{table}

\newpage
\section{Missing Algorithmic Details}
Here, we provide details for the two-loop algorithm and the augmented Lagrangian algorithm. 

\subsection{Modified Two-loop Recursion}
In general, the purpose of the two-loop recursion algorithm~\cite{NocedalW99} is to solve the quasi-Newton equation for computing the new search direction $d^k$, i.e.,
\[
   -\nabla f(x^k)=B^k d^k,
\]
where $B^k$ is the Hessian approximation in iteration $k$. Since some of the variables are fixed in iteration $k$, the equation needs to be solved only for the subset $S^k$ of 
free variables, i.e.,
\[
   -\nabla f(x^k)[S^k]=B^k[S^k, S^k] d^k[S^k].
\]

The new search direction that is computed by the L-BFGS update rule needs to be a descent direction. In order to satisfy this constraint, the Hessian approximation $B^k$ needs to be positive definite. In general, if each correction pair $(y^i, s^i)$ satisfies the curvature condition $\sprod{y^i}{s^i}\geq \vareps \|y^i\|^2$, then $B^k$ has a smallest eigenvalue that can be bounded by a positive constant $c$, see~\cite{MokhtariR15}. Even when the objective function is convex and the curvature condition is satisfied for all $i\leq k$, it can happen that the curvature condition is violated on the subspace of the free variables with indices in $S^k$, i.e., it can even happen that $\sprod{y^i[S^k]}{s^i[S^k]}< 0$. Using this correction pair for computing the next search direction, does not provide a descent direction. Hence, in order for the method to work, the corresponding curvature condition needs to be checked for all stored curvature pairs and the current index set $S^k$. Algorithm~\ref{algo:3} incorporates this strategy. 

As mentioned in the main paper, the strong Wolfe conditions in the line search assure that the curvature condition is satisfied for the whole correction pair $(y^i, s^i)$. However, since this does not imply the curvature condition for the reduced correction pair $(y^i[S^k], s^i[S^k])$, it is not necessary to satisfy the strong Wolfe conditions. Instead, the weaker Armijo conditions are sufficient in the line search. The Armijo conditions are often satisfied after fewer steps in the line search.

\setcounter{algorithm}{2}
\begin{algorithm}[h!]
   \caption{Modified Two-loop Recursion}
   \label{algo:3}
\begin{algorithmic}[1]
   \STATEx {\hspace{-0.5cm} \bfseries Input:} gradient $\nabla f(x^k)$, index set $S^k$
   \vspace{0.2cm}

  \STATE $q=\nabla f(x^k)[S^k]$   
   \FOR{$i=k-1, \ldots , k-m$}
   \STATE $\rho^i=\sprod{s^i[S^k]}{y^i[S^k]}$
   \IF{$\rho^i > \vareps \|y^i\|^2$}
   \STATE $\alpha^i = \frac{1}{\rho^i} \cdot \sprod{s^i[S^k]}{q}$
   \STATE $q = q- \alpha^i \cdot y^i[S^k]$
   \ENDIF
   \ENDFOR

   \IF{$\rho^{k-1} > \vareps \|y^{k-1}\|^2$}
   \STATE $q = \frac{\rho^{k-1}}{\|y^{k-1}\|^2} \cdot q$
   \ENDIF
  
   \FOR{$i=k-m, \ldots , k-1$}
   \IF{$\rho^i > \vareps \|y^i\|^2$}
   \STATE $\beta = \frac{1}{\rho^i} \cdot \sprod{y^i[S^k]}{q}$
   \STATE $q = q + (\alpha^i-\beta) \cdot s^i[S^k]$
   \ENDIF
   \ENDFOR
   \RETURN q
\end{algorithmic}
\end{algorithm}

\subsection{Augmented Lagrangian Algorithm}
The presented algorithm can solve optimization problems with box constraints, i.e.,  upper and lower bounds on the variables. In order to solve general constrained optimization problems, i.e., 
\begin{equation}\label{eq:constrained}
\begin{array}{rl}  \displaystyle
  \min_{x} &  f(x) \\
  \st &  h(x) = 0 \\
  	& g(x) \leq 0 \\
  	& l \leq x \leq u,
\end{array}
\end{equation}
we use the augmented Lagrangian algorithm. It reduces the constrained optimization problem to a sequence of box-constrained optimization problems by incorporating the constraints into the augmented Lagrangian of the problem, i.e., 
\begin{equation}
\label{eq:augLag}
  L (x, \lambda, \mu, \rho) = f(x) + \frac{\rho}{2}
  \norm{h(x)+\lambda /\rho}^2 + \frac{\rho}{2} \norm{\left(g(x)
    + \mu /\rho\right)_+}^2,
\end{equation}
where $\lambda\in\setR^m$ and $\mu\in\setR_{\geq 0}^p$ are Lagrange
multipliers, $\rho >0$ is a constant, and $(v)_+$ denotes $\max\{v, 0\}$. 

The augmented Lagrangian algorithm is shown in Algorithm~\ref{algo:4}. It runs in iterations and minimizes the augmented Lagrangian function~\eqref{eq:augLag} in each iteration using Algorithm~1. Then, it updates the Lagrangian multipliers $\lambda$ and $\mu$. If the infinity norm of the constraint violation is not halved in an iteration, then $\rho$ is multiplied by a factor of 2. Convergence of the augmented Lagrangian algorithm was shown in~\cite{Bertsekas99, Birgin14}.

\begin{algorithm}[h!]
  \caption{Augmented Lagrangian Algorithm}
  \label{algo:4}
  \begin{algorithmic}[1]
    \STATE {\bfseries input:} constrained optimization Problem~\eqref{eq:constrained}
    \STATE {\bfseries output:} approximate solution $x\in\setR^{n},
    \lambda\in\setR^{p}, \mu\in\setR_{\geq 0}^{m}$ 
    \STATE initialize $x^0 = 0$, $\lambda^0 = 0$, $\mu^0 = 0$, and $\rho=1$
    \REPEAT
    \STATE  $x^{k+1} :=\quad \argmin_{l\leq x\leq u}\, L(x, \lambda^k, \mu^k, \rho)$ \label{algo:x}
    \STATE $\lambda^{k+1} :=\quad  \lambda^k + \rho h(x^{k+1})$ \label{algo:lambda}
    \STATE $\mu^{k+1} :=\quad  \left(\mu^k + \rho g(x^{k+1})\right)_+$ \label{algo:mu}
    \STATE update $\rho$ \label{algo:rho}
    \UNTIL{convergence}
    \RETURN $x^k, \lambda^k, \mu^k$
  \end{algorithmic}
\end{algorithm}

\clearpage
\section{Experiments}
Here, we present the comparisons of our framework including its CPU version and the GENO framework~\cite{LaueMG2019} for CPUs. We also include the running times for the joint probability experiment on the second data set that was excluded from the main paper due to space constraints. It can be seen that our framework on the GPU outperforms the highly efficient GENO framework by a large margin. Since the solvers generated by our framework are written entirely in Python, they can also run on the CPU by mapping all linear algebra expressions to NumPy instead of CuPy. This allows to run our framework also on the CPU. The corresponding entry in the tables is `this paper CPU'. In all experiments, it can be seen  that the GPU version of our framework outperforms all other frameworks once the size of the data set is reasonably large. For small data sets, the full capabilities of the GPU cannot be exploited, and hence, it does not pay off to run them on the GPU. Note again, the generated solvers of our approach were run until they obtained a \emph{smaller} objective function value and constraint violation than the competing approaches. The absolute errors were usually between $10^{-3}$ and $10^{-5}$.

\subsection{Fairness in Machine Learning}

 \begin{figure}[h]
  \centering
  \includegraphics[width=0.24\textwidth]{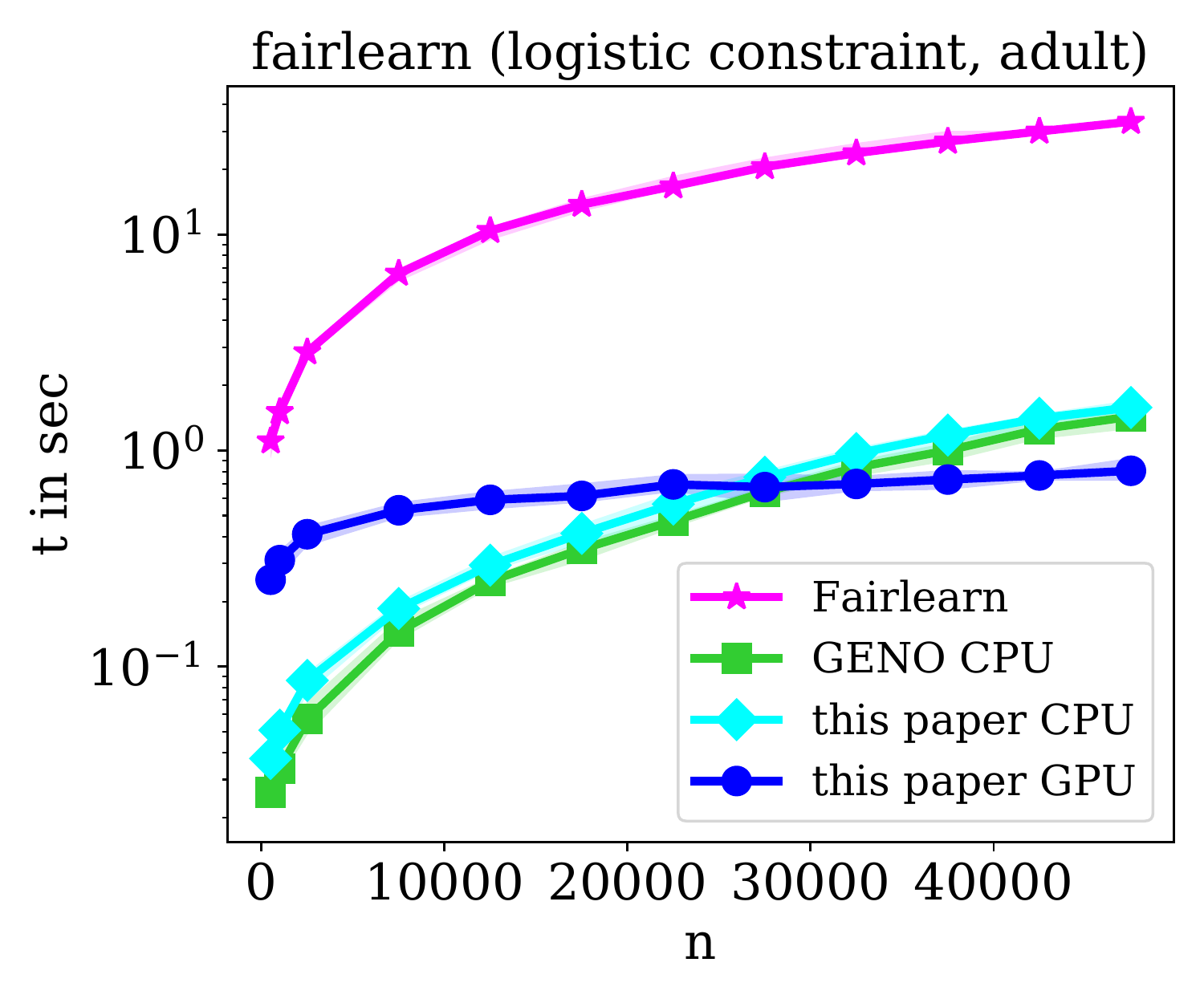}
  \includegraphics[width=0.24\textwidth]{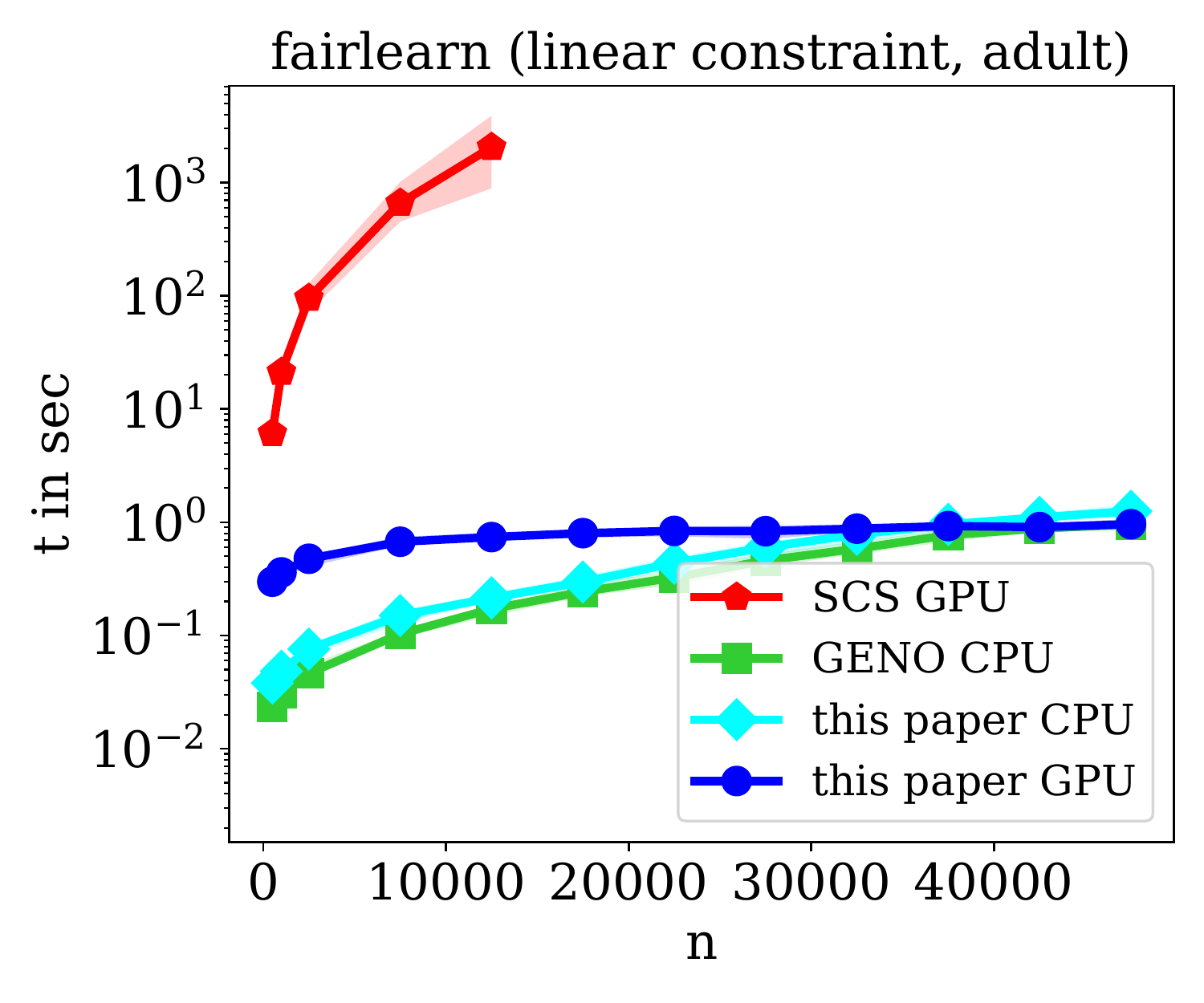}
  \includegraphics[width=0.24\textwidth]{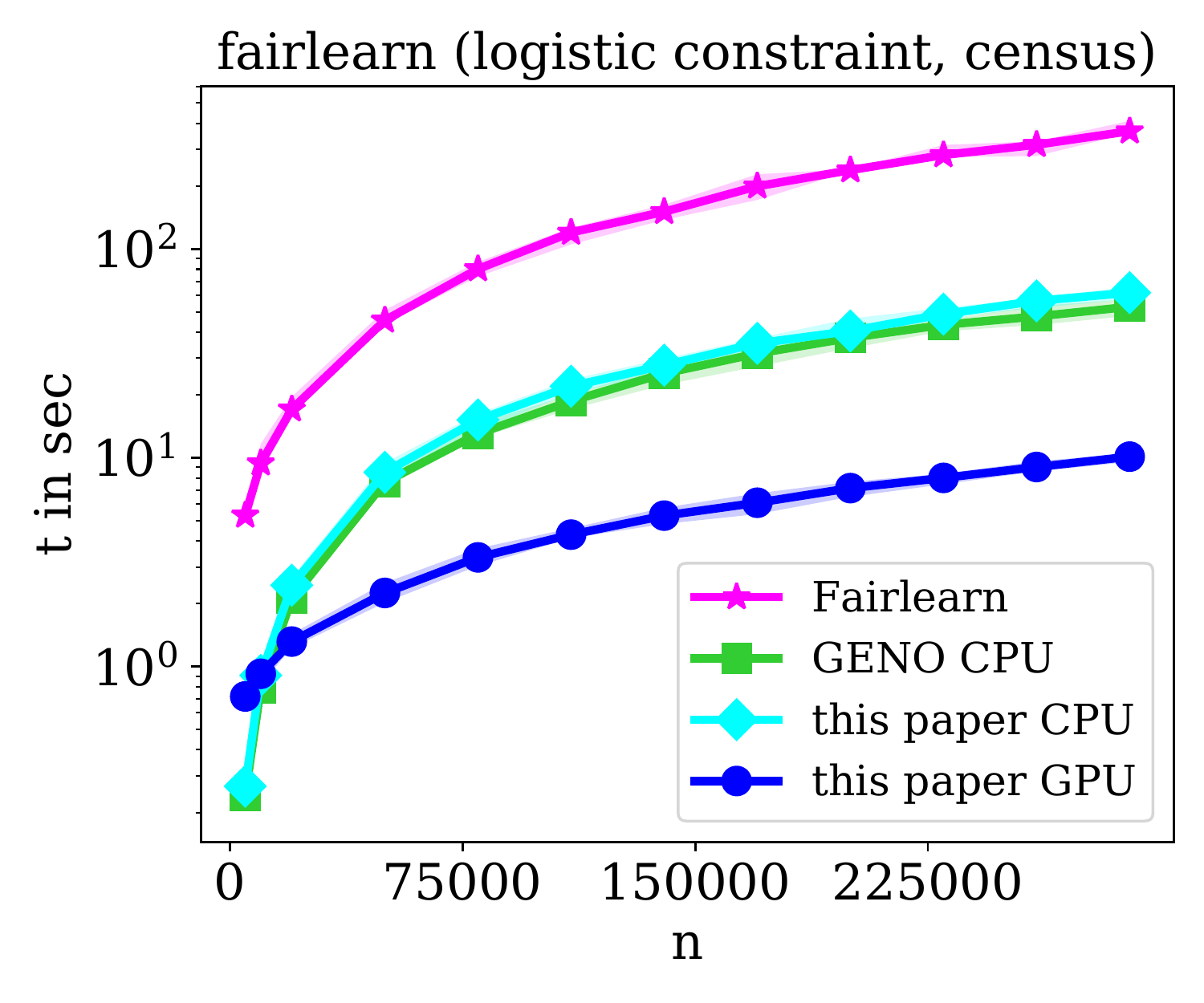}
  \includegraphics[width=0.24\textwidth]{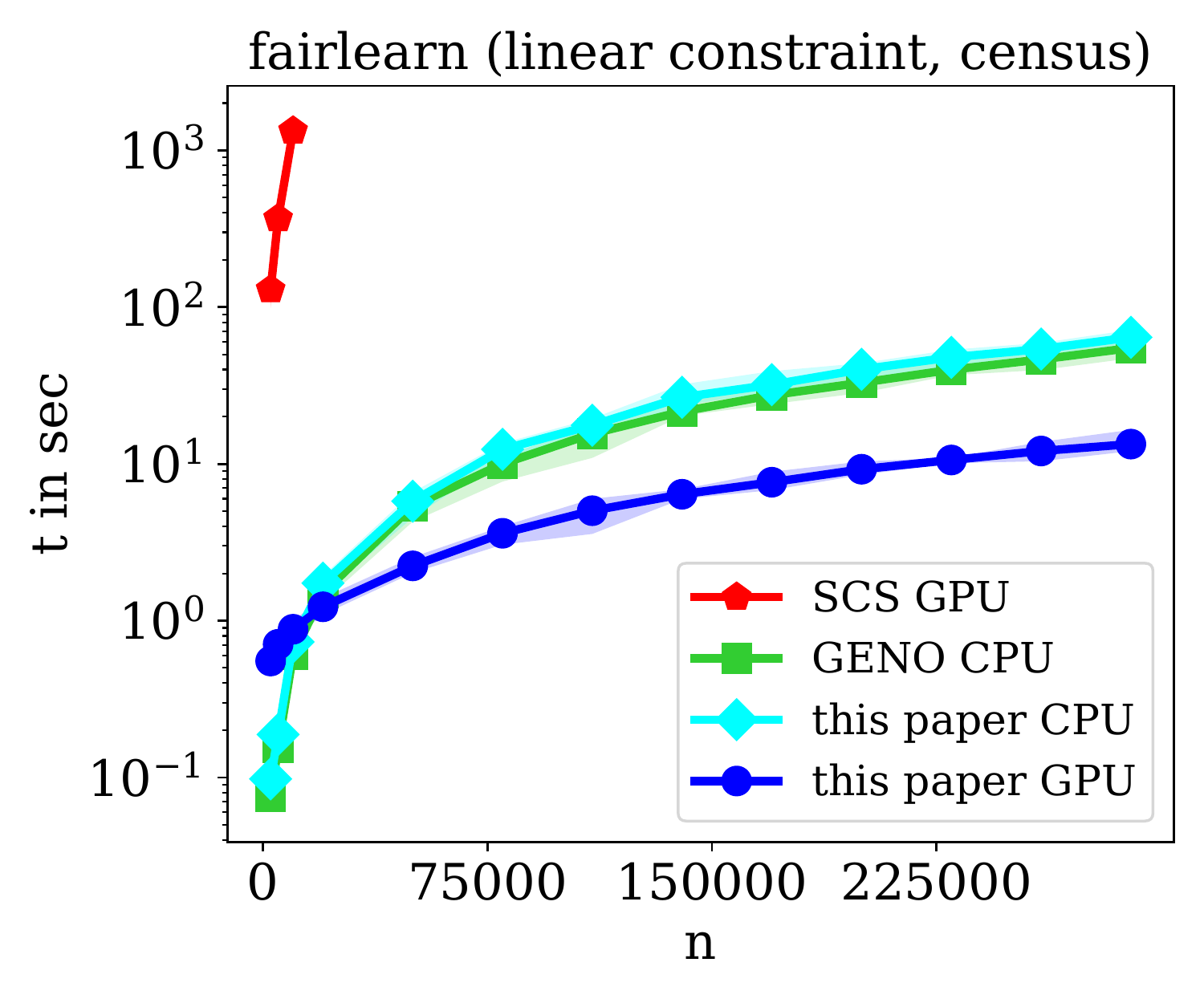}
  \caption{Running times for the logistic regression problem with fairness constraints. The two plots on the left show the running times for the adult data set and the two plots on the right for the census data set. For each data set, one plot shows the running times for using the logistic loss in the fairness constraint and one plot for using the linear loss.}
  \label{fig:fairlearn}
\end{figure}

\begin{table}[h!]
  \begin{center}
{\small
  \begin{tabular}{l*{7}{r}}
    \toprule
    \multirow{2}{*}{Solver} & \multicolumn{6}{c}{fairlearn (logistic constraint, adult)} \\
    \cmidrule{2-7}
    & 500 & 2500 & 12500 & 22500 & 32500 & 42500 \\
    \midrule
    this paper GPU	& $0.3\pm 0.0$ &  $0.4\pm 0.0$ & $0.6\pm 0.0$ & $0.7\pm 0.0$ & $0.7\pm 0.0$ & $0.8\pm 0.0$ \\
    this paper CPU	& $0.0\pm 0.0$  & $0.1\pm 0.0$ & $0.3\pm 0.0$ & $0.6\pm 0.0$ & $1.0\pm 0.1$ & $1.4\pm 0.1$ \\
    GENO CPU		& $0.0\pm 0.0$ &  $0.1\pm 0.0$ & $0.2\pm 0.0$ & $0.5\pm 0.0$ &  $0.8\pm 0.0$ & $1.3\pm 0.1$ \\
    Fairlearn		& $1.1\pm 0.1$ & $2.9\pm 0.0$ & $10.4\pm 0.7$  & $16.7\pm 1.0$ & $23.8\pm 1.4$ & $30.0\pm 0.2$ \\
    \bottomrule
  \end{tabular}}
  \caption{Running times in seconds for fairlearn (logistic constraint, adult).}
  \end{center}
\end{table}

\begin{table}[h!]
  \begin{center}
  \small
  \begin{tabular}{l*{7}{r}}
    \toprule
    \multirow{2}{*}{Solver} & \multicolumn{6}{c}{fairlearn (linear constraint, adult)} \\
    \cmidrule{2-7}
    & 500 & 2500 & 12500 & 22500 & 32500 & 42500 \\
    \midrule
    this paper GPU & $0.3\pm 0.0$ & $0.5\pm 0.0$   & $0.7\pm 0.0$       & $0.8\pm 0.1$ & $0.9\pm 0.1$ & $0.9\pm 0.1$ \\
    this paper CPU & $0.0\pm 0.0$ & $0.1\pm 0.0$   & $0.2\pm 0.0$       & $0.4\pm 0.0$ & $0.8\pm 0.1$ & $1.1\pm 0.1$ \\
    GENO CPU       & $0.0\pm 0.0$ & $0.0\pm 0.0$   & $0.2\pm 0.0$       & $0.3\pm 0.0$ & $0.6\pm 0.0$ & $0.9\pm 0.0$ \\
    SCS GPU        & $6.1\pm 0.8$ & $95.5\pm 17.9$ & $2056.8\pm 971.6$  & N/A & N/A & N/A  \\
    \bottomrule
  \end{tabular}
  \caption{Running times in seconds for fairlearn (linear constraint, adult).}
  \end{center}
\end{table}

\begin{table}[h!]
\begin{center}
  \small
  \begin{tabular}{l*{7}{r}}
    \toprule
    \multirow{2}{*}{Solver} & \multicolumn{6}{c}{fairlearn (logistic constraint, census)} \\
    \cmidrule{2-7}
                   & 10000        & 50000         & 110000         & 170000          & 230000          & 290000 \\
    \midrule                                                                                           
    this paper GPU & $0.9\pm 0.1$ & $2.3\pm 0.1$  & $4.3\pm 0.1$   & $6.1\pm 0.4$    & $8.0\pm 0.3$    & $10.1\pm 0.3$ \\
    this paper CPU & $0.9\pm 0.1$ & $8.5\pm 0.5$  & $22.0\pm 0.9$  & $35.3\pm 1.3$   & $48.7\pm 1.6$   & $61.7\pm 3.3$ \\
    GENO CPU       & $0.8\pm 0.0$ & $7.6\pm 0.4$  & $18.7\pm 0.9$  & $31.6\pm 2.0$   & $43.3\pm 2.0$   & $52.9\pm 2.8$ \\
    Fairlearn      & $9.4\pm 1.0$ & $45.5\pm 2.8$ & $120.0\pm 7.5$ & $199.7\pm 17.6$ & $281.9\pm 11.9$ & $366.6\pm 16.0$ \\
    \bottomrule
  \end{tabular}
  \caption{Running times in seconds for fairlearn (logistic constraint, census).}
  \end{center}
\end{table}

\begin{table}[h!]
\begin{center}
\small
  \begin{tabular}{l*{7}{r}}
    \toprule
    \multirow{2}{*}{Solver} & \multicolumn{6}{c}{fairlearn (linear constraint, census)} \\
    \cmidrule{2-7}
					& 2500            & 10000             & 50000         & 170000        & 230000        & 290000 \\
    \midrule                                                                                              
    this paper GPU 	& $0.6\pm 0.0$    & $0.9\pm 0.1$      & $2.2\pm 0.2$  & $7.6\pm 0.7$  & $10.6\pm 0.3$ & $13.4\pm 1.4$ \\
    this paper CPU 	& $0.1\pm 0.0$    & $0.7\pm 0.0$      & $5.8\pm 0.6$  & $32.0\pm 2.9$ & $47.8\pm 4.5$ & $64.2\pm 5.7$ \\
    GENO CPU       	& $0.1\pm 0.0$ 	  & $0.6\pm 0.0$      & $5.4\pm 0.7$  & $27.4\pm 3.0$ & $39.9\pm 3.1$ & $54.8\pm 7.3$ \\
    SCS GPU        	& $129.3\pm 18.6$ & $1331.3\pm 146.7$ & N/A & N/A & N/A & N/A  \\
    \bottomrule
  \end{tabular}
  \caption{Running times in seconds for fairlearn (linear constraint, census).}
  \end{center}
\end{table}

\newpage

\subsection{Support Vector Machines}

 \begin{figure}[h]
  \centering
  \includegraphics[width=0.32\textwidth]{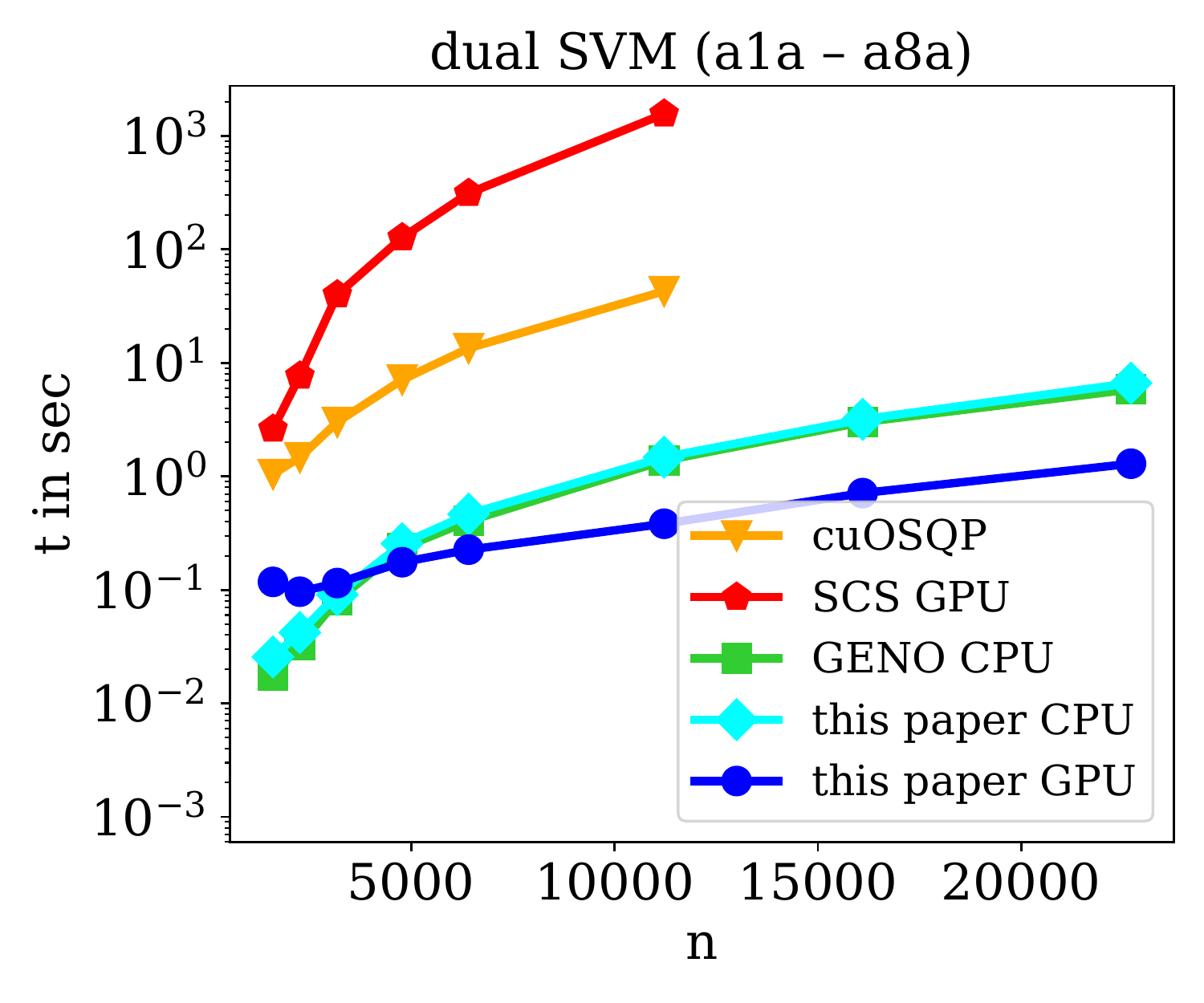}
  \includegraphics[width=0.32\textwidth]{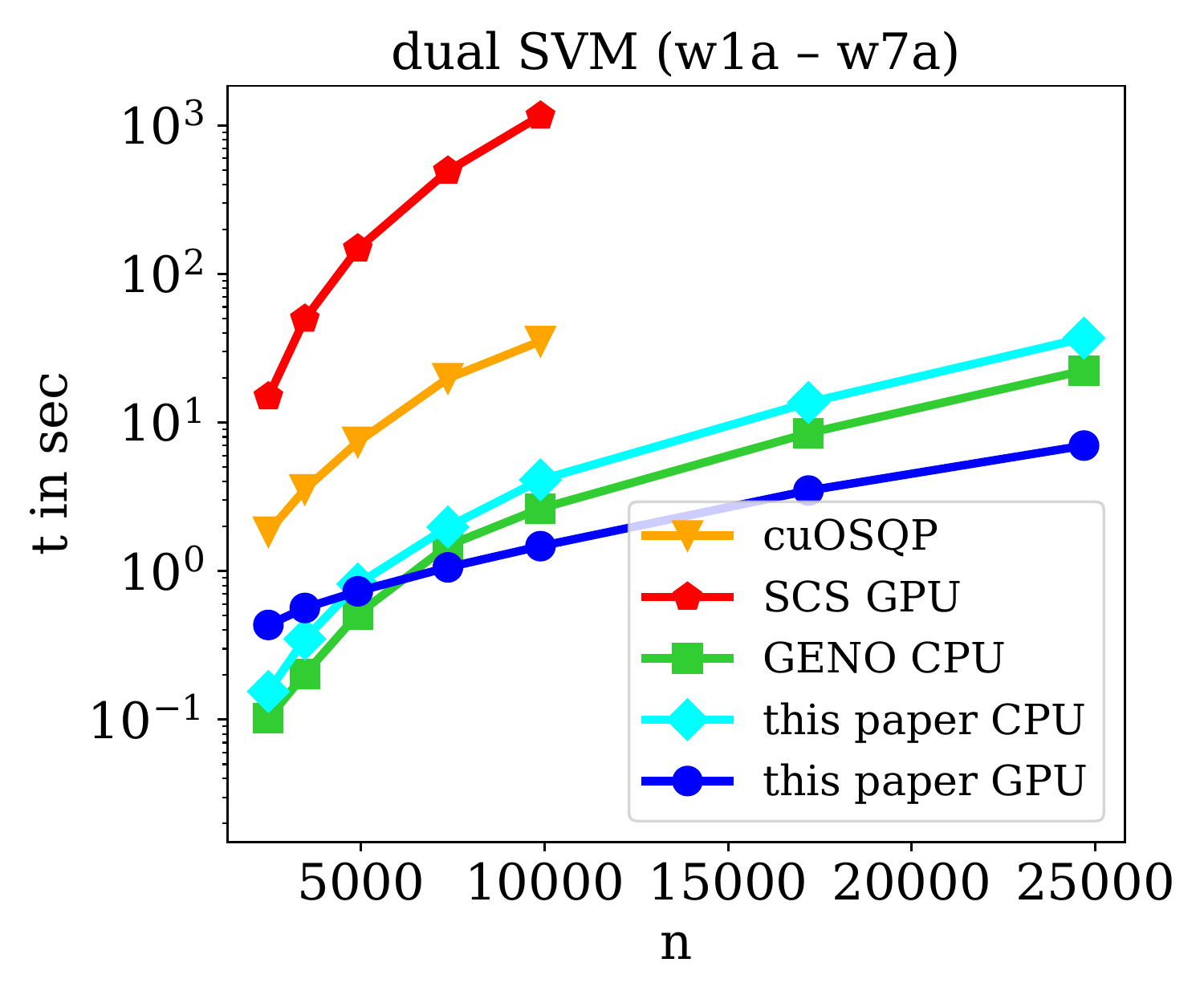}
  \caption{
The plot on the left shows the running times for the SVM problem on the adult data set for an increasing number of data points. The plot on the right shows the web data set for an increasing number of data points.}
  \label{fig:SVM}
\end{figure}

\begin{table}[h!]
\small
  \begin{center}
  \label{tab:SVM}
  \begin{tabular}{l*{8}{r}}
    \toprule
    \multirow{2}{*}{Solver} & \multicolumn{7}{c}{Data sets} \\
    \cmidrule{2-8}
    & cod-rna & covtype & ijcnn1 & mushrooms & phishing & a9a & w8a \\
    \midrule
    this paper GPU & 0.7 & 0.1 & 1.8 & 0.1 & 1.7 & 0.3 & 1.5 \\
    this paper CPU & 2.0 & 0.4 & 5.4 & 0.3 & 5.4 & 1.1 & 4.3 \\
    GENO CPU & 1.9 & 0.4 & 19.4 & 0.4 & 6.4 & 1.0 & 3.2 \\
    cuOSQP & 55.7 & failed & 206.6 & 22.1 & 163.6 & 32.9 & 36.1 \\
    SCS GPU & 2342.1 & 31.4 & N/A & 7995.0 & N/A & 1094.1 & 1227.1 \\
    \bottomrule
  \end{tabular}
  \caption{Running times in seconds for the dual SVM problem. All data sets were subsampled to $10,000$ data points. N/A indicates that the solver did not finish within $10,000$ seconds.}
  \end{center}
\end{table}

\begin{table}[h!]
\begin{center}
\small
  \begin{tabular}{l*{9}{r}}
    \toprule
    \multirow{2}{*}{Solver} & \multicolumn{8}{c}{dual SVM (a1a – a8a)} \\
    \cmidrule{2-9}
    & 1605 & 2265 & 3185 & 4781 & 6414 & 11220 & 16100 & 22696 \\
    \midrule
    this paper GPU & $0.1$ & $0.1$ & $0.1$ & $0.2$ & $0.2$ & $0.4$ & $0.7$ & $1.3$ \\
    this paper CPU & $0.0$ & $0.0$ & $0.1$ & $0.3$ & $0.5$ & $1.5$ & $3.2$ & $6.6$ \\
    GENO CPU & $0.0$ & $0.0$ & $0.1$ & $0.2$ & $0.4$ & $1.4$ & $3.0$ & $5.8$ \\
    cuOSQP & $1.0$ & $1.5$ & $3.0$ & $7.1$ & $13.5$ & $42.9$ & N/A & N/A \\
    SCS GPU & $2.6$ & $7.7$ & $40.0$ & $127.3$ & $312.9$ & $1565.9$ & N/A & N/A \\
    \bottomrule
  \end{tabular}
  \caption{Running times in seconds for dual SVM (adult data set, a1a – a8a).}
  \end{center}
\end{table}

\begin{table}[h!]
\begin{center}
\small
  \begin{tabular}{l*{8}{r}}
    \toprule
    \multirow{2}{*}{Solver} & \multicolumn{7}{c}{dual SVM (w1a – w7a)} \\
    \cmidrule{2-8}
    & 2477 & 3470 & 4912 & 7366 & 9888 & 17188 & 24692 \\
    \midrule
    this paper GPU & $0.4$ & $0.6$ & $0.7$ & $1.1$ & $1.5$ & $3.5$ & $7.0$ \\
    this paper CPU & $0.2$ & $0.3$ & $0.8$ & $2.0$ & $4.1$ & $13.6$ & $37.0$ \\
    GENO CPU & $0.1$ & $0.2$ & $0.5$ & $1.5$ & $2.6$ & $8.5$ & $22.4$ \\
    cuOSQP & $1.8$ & $3.5$ & $7.4$ & $19.8$ & $35.3$ & N/A & N/A \\
    SCS GPU & $14.9$ & $49.7$ & $147.9$ & $493.1$ & $1159.0$ & N/A & N/A \\
    \bottomrule
  \end{tabular}
  \caption{Running times in seconds for dual SVM (web data set, w1a – w7a).}
  \end{center}
\end{table}

\newpage

\subsection{Non-negative Least Squares}

 \begin{figure}[h]
  \centering
  \includegraphics[width=0.32\textwidth]{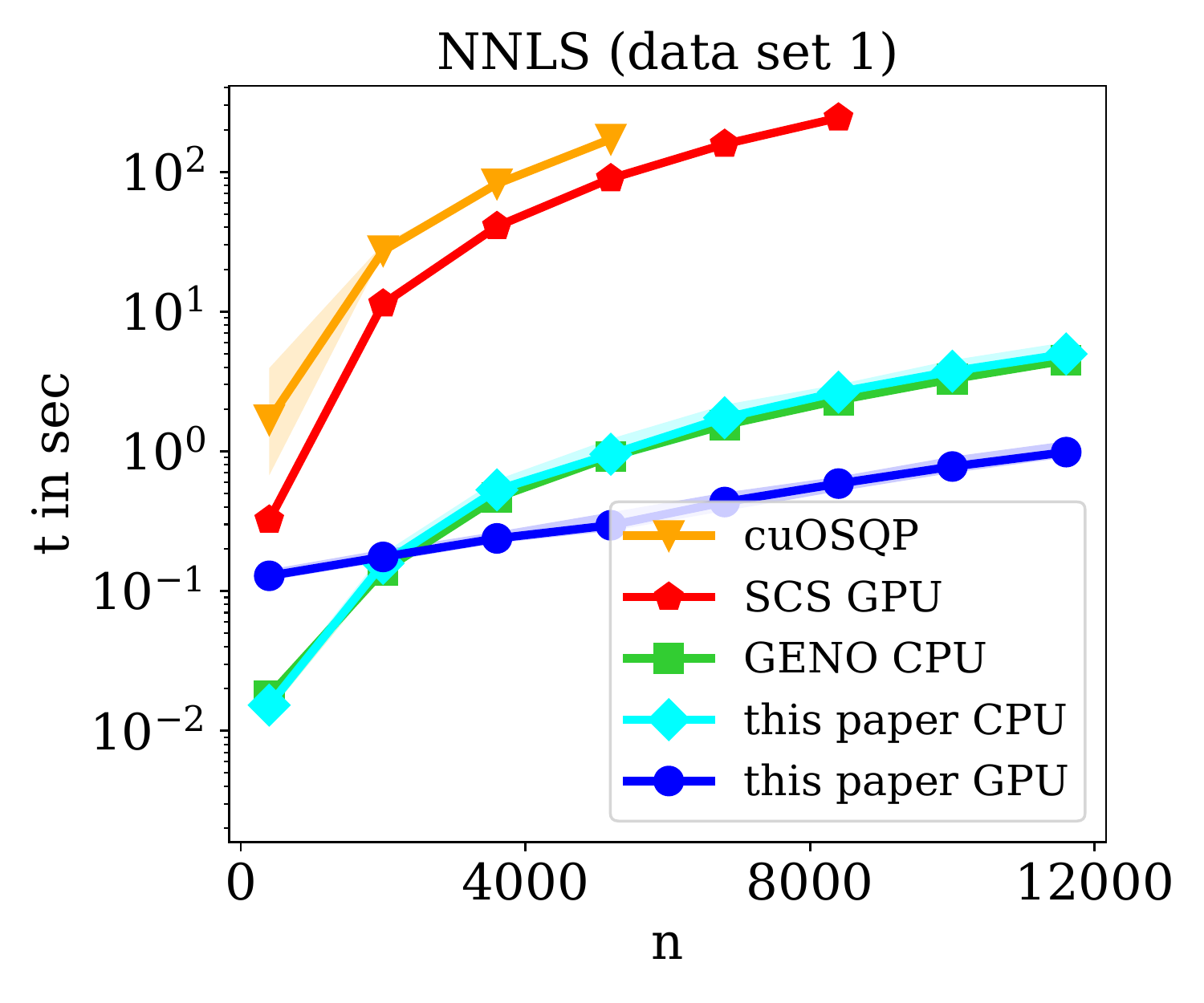}
  \includegraphics[width=0.32\textwidth]{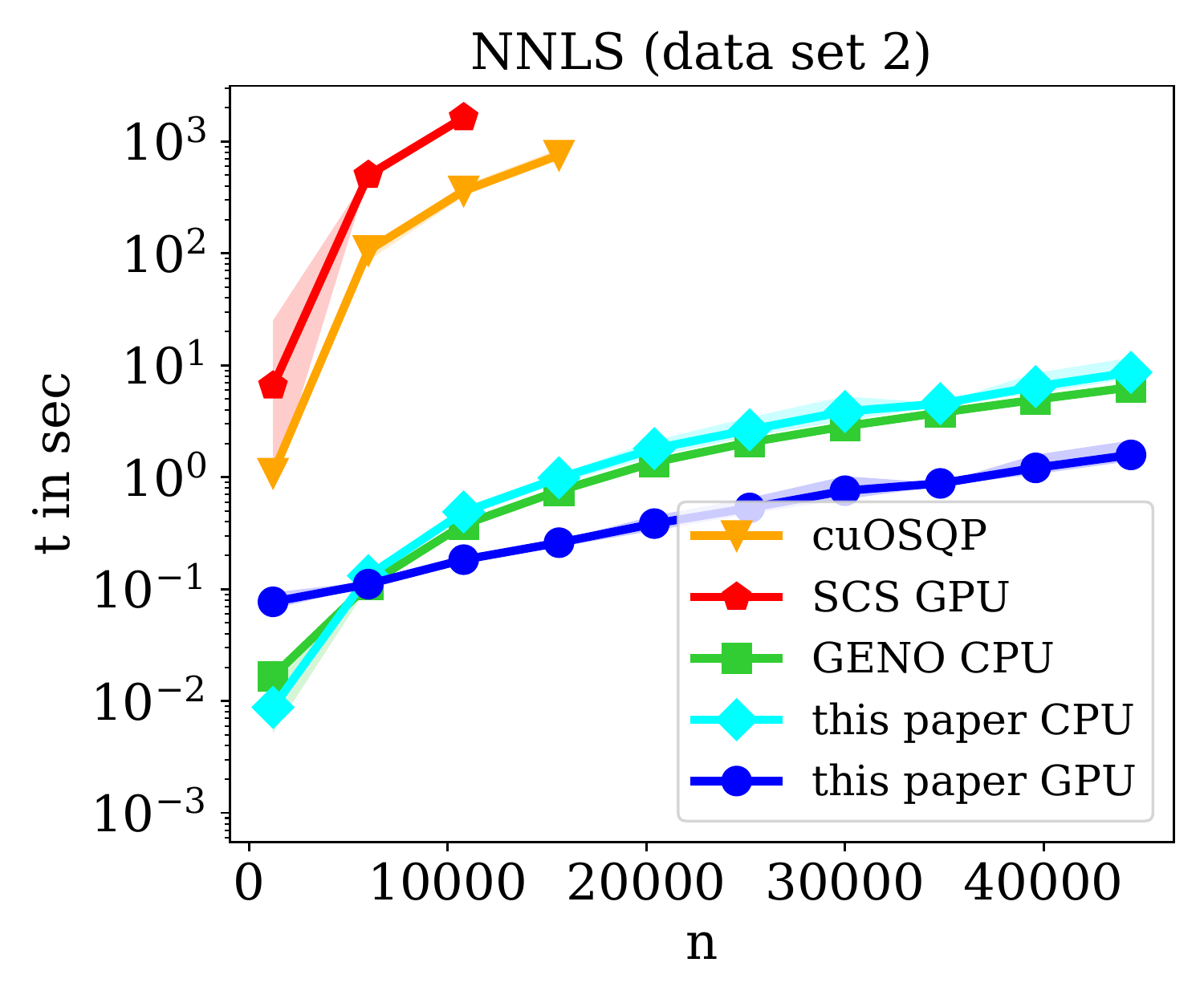}
  \caption{The plots show the running times for the non-negative least squares problem when run on the first and second data set.}
  \label{fig:NNLS}
\end{figure}

\begin{table}[h!]
\begin{center}
  \small
  \begin{tabular}{l*{7}{r}}
    \toprule
    \multirow{2}{*}{Solver} & \multicolumn{6}{c}{NNLS (data set 1)} \\
    \cmidrule{2-7}
                   & 3600 & 5200 & 6800 & 8400 & 10000 & 11600 \\
    \midrule
    this paper GPU & $0.2\pm 0.0$ & $0.3\pm 0.0$ & $0.4\pm 0.0$ & $0.6\pm 0.0$ & $0.8\pm 0.1$ & $1.0\pm 0.1$ \\
    this paper CPU & $0.5\pm 0.1$ & $1.0\pm 0.1$ & $1.7\pm 0.2$ & $2.6\pm 0.2$ & $3.7\pm 0.4$ & $5.0\pm 0.5$ \\
    GENO CPU       & $0.5\pm 0.0$ & $0.9\pm 0.0$ & $1.5\pm 0.0$ & $2.3\pm 0.0$ & $3.3\pm 0.1$ & $4.5\pm 0.1$ \\
    cuOSQP         & $82.0\pm 1.0$ & $171.4\pm 5.5$ & N/A & N/A & N/A & N/A \\
    SCS GPU        & $40.7\pm 0.4$ & $89.5\pm 1.0$ & $157.7\pm 2.0$ & $243.4\pm 3.3$ & N/A & N/A \\
    \bottomrule
  \end{tabular}
  \caption{Running times in seconds for NNLS (data set 1).}
  \end{center}
\end{table}

\begin{table}[h!]
\begin{center}
  \small
 
  \begin{tabular}{l*{7}{r}}
    \toprule
    \multirow{2}{*}{Solver} & \multicolumn{6}{c}{NNLS (data set 2)} \\
    \cmidrule{2-7}
                   & 6000             & 10800            & 15600          & 25200        & 34800        & 44400 \\
    \midrule                                                                                            
    this paper GPU & $0.1\pm 0.0$     & $0.2\pm 0.0$     & $0.3\pm 0.0$   & $0.5\pm 0.1$ & $0.9\pm 0.0$ & $1.6\pm 0.2$ \\
    this paper CPU & $0.1\pm 0.0$     & $0.5\pm 0.0$     & $1.0\pm 0.1$   & $2.7\pm 0.4$ & $4.5\pm 0.1$ & $8.6\pm 1.3$ \\
    GENO CPU       & $0.1\pm 0.0$     & $0.4\pm 0.0$     & $0.8\pm 0.0$   & $2.1\pm 0.0$ & $3.8\pm 0.0$ & $6.4\pm 0.1$ \\
    cuOSQP         & $106.3\pm 10.5$  & $362.3\pm 27.7$  & $755.4\pm 41.1$ & N/A & N/A & N/A  \\
    SCS GPU        & $500.2\pm 4.2$   & $1637.4\pm 14.3$ & N/A & N/A & N/A & N/A \\
    \bottomrule
  \end{tabular}
  \caption{Running times in seconds for NNLS (data set 2).}
  \end{center}
\end{table}

\newpage
\subsection{Joint Probability}
Here, we show the running times for computing the joint probability distribution of two probability distributions, i.e., a distribution that has the two given distributions as marginals, see also the main paper. The second data set was created in the same way as in~\cite{FrognerP19}, i.e., we sampled $u$ and $v$ uniformly at random from the unit interval and scaled them such that each vector sums up to one. All entries of the cost matrices $M$ were also sampled uniformly at random from the unit interval. We fixed the regularization parameter $\lambda =\frac{1}{2}$. We set the size of the problems to be $m=2n$. Hence, when $n=1000$, the corresponding optimization problem involves $2\cdot 10^6$ optimization variables and has $3000$ constraints. Figure~\ref{fig:jointProb} shows the running times for both data sets and both regularizers for varying problem sizes, including the running times for the CPU versions.

 \begin{figure}[h]
  \centering
  \includegraphics[width=0.24\textwidth]{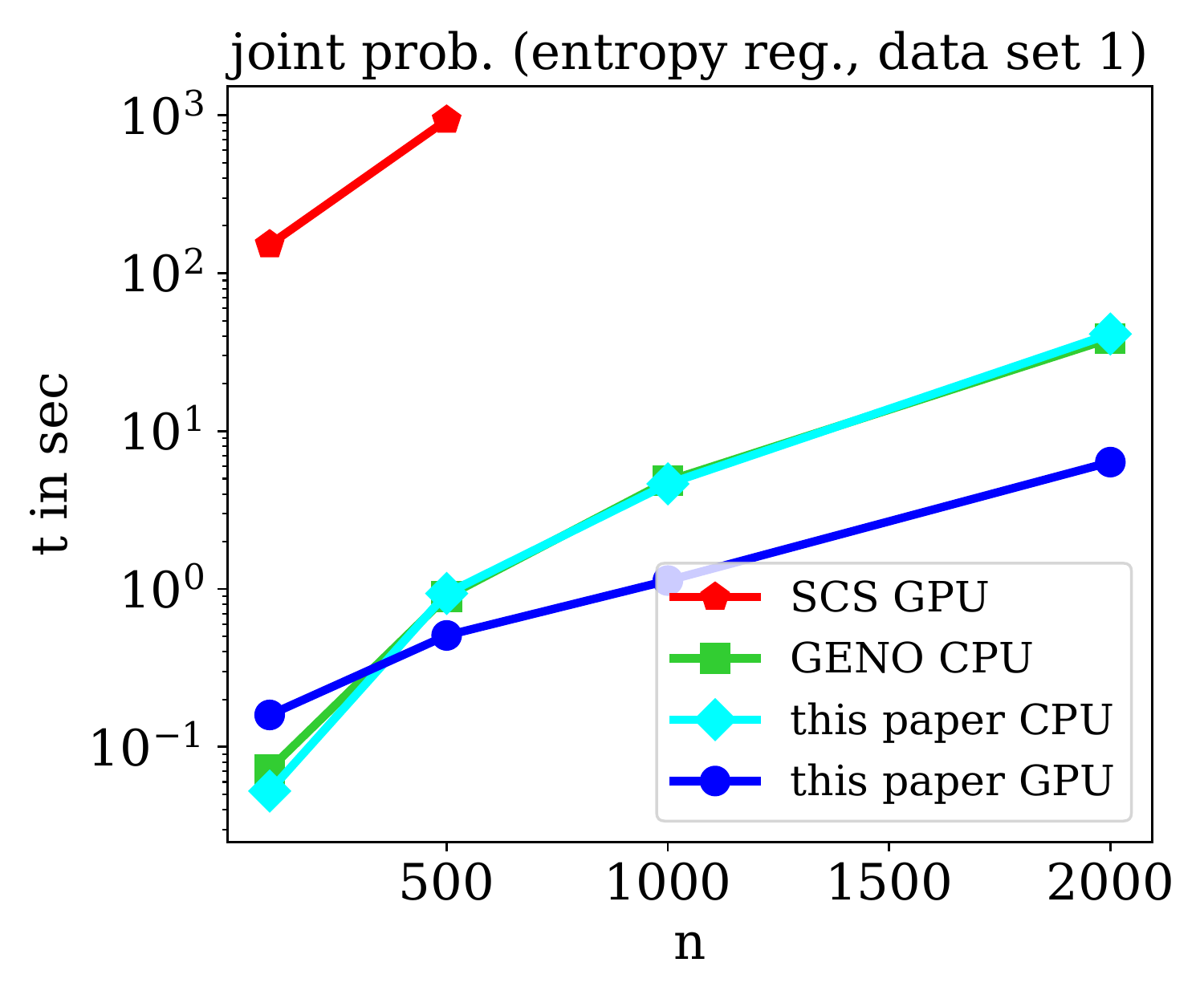}
  \includegraphics[width=0.24\textwidth]{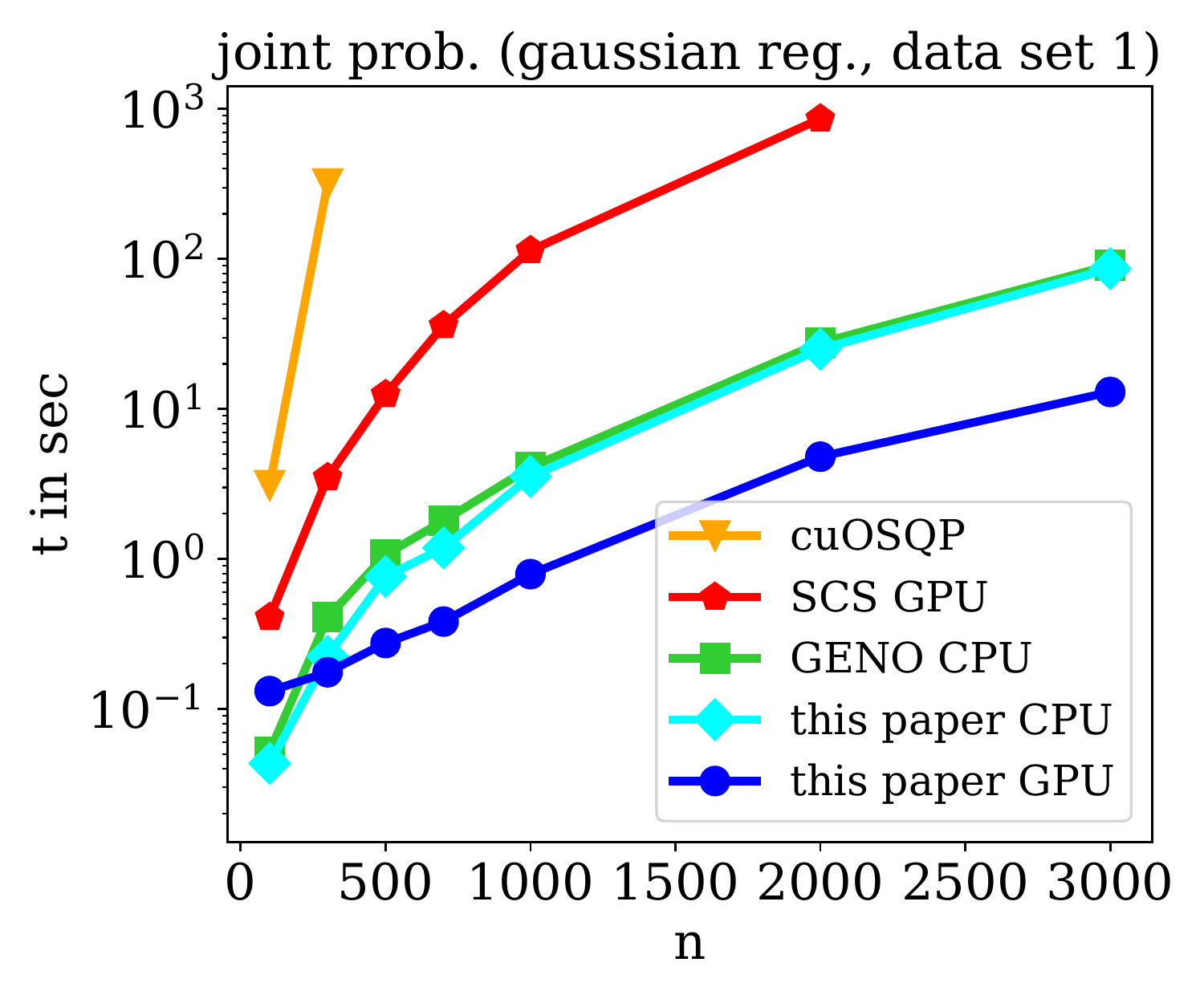}
  \includegraphics[width=0.24\textwidth]{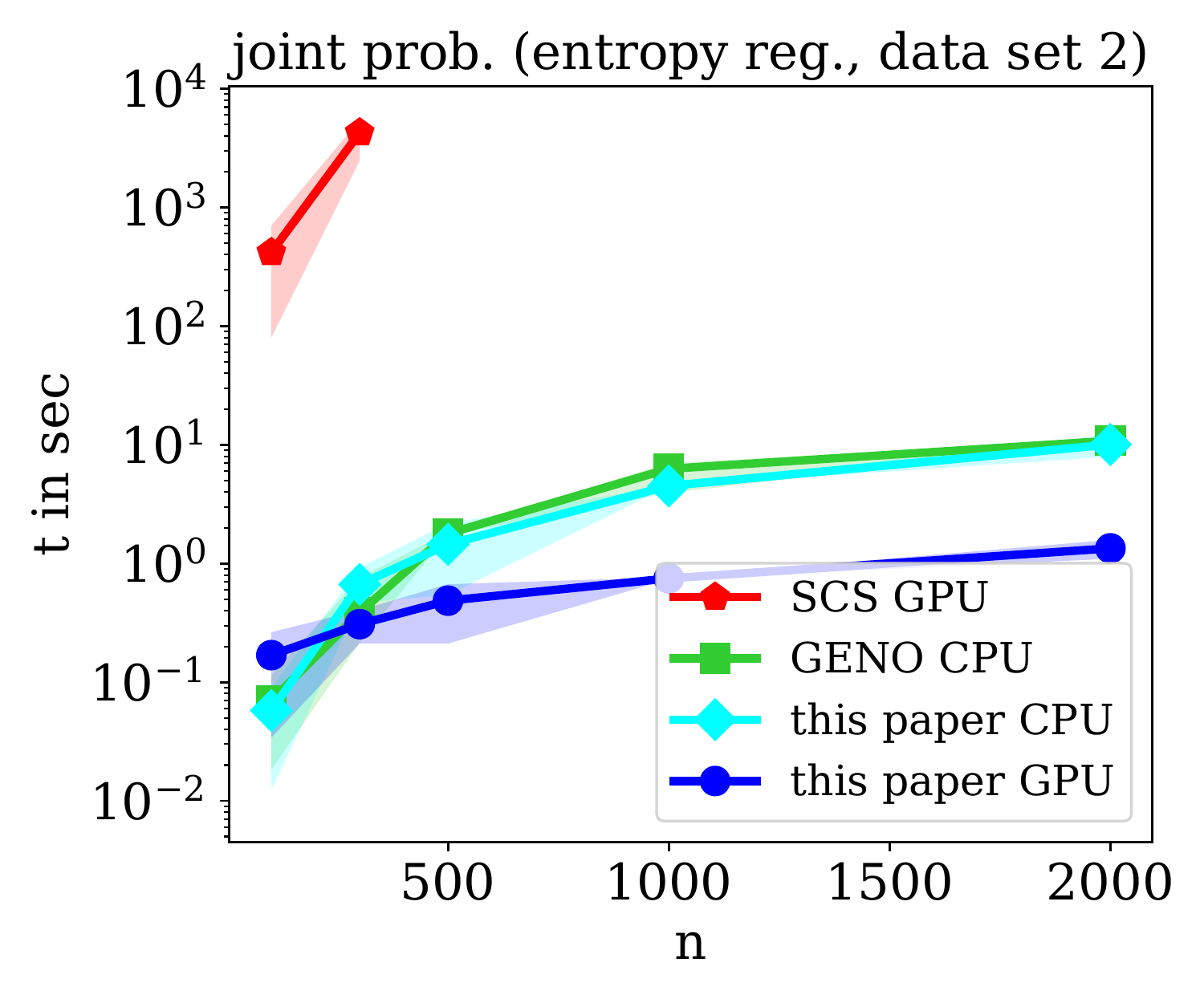}
  \includegraphics[width=0.24\textwidth]{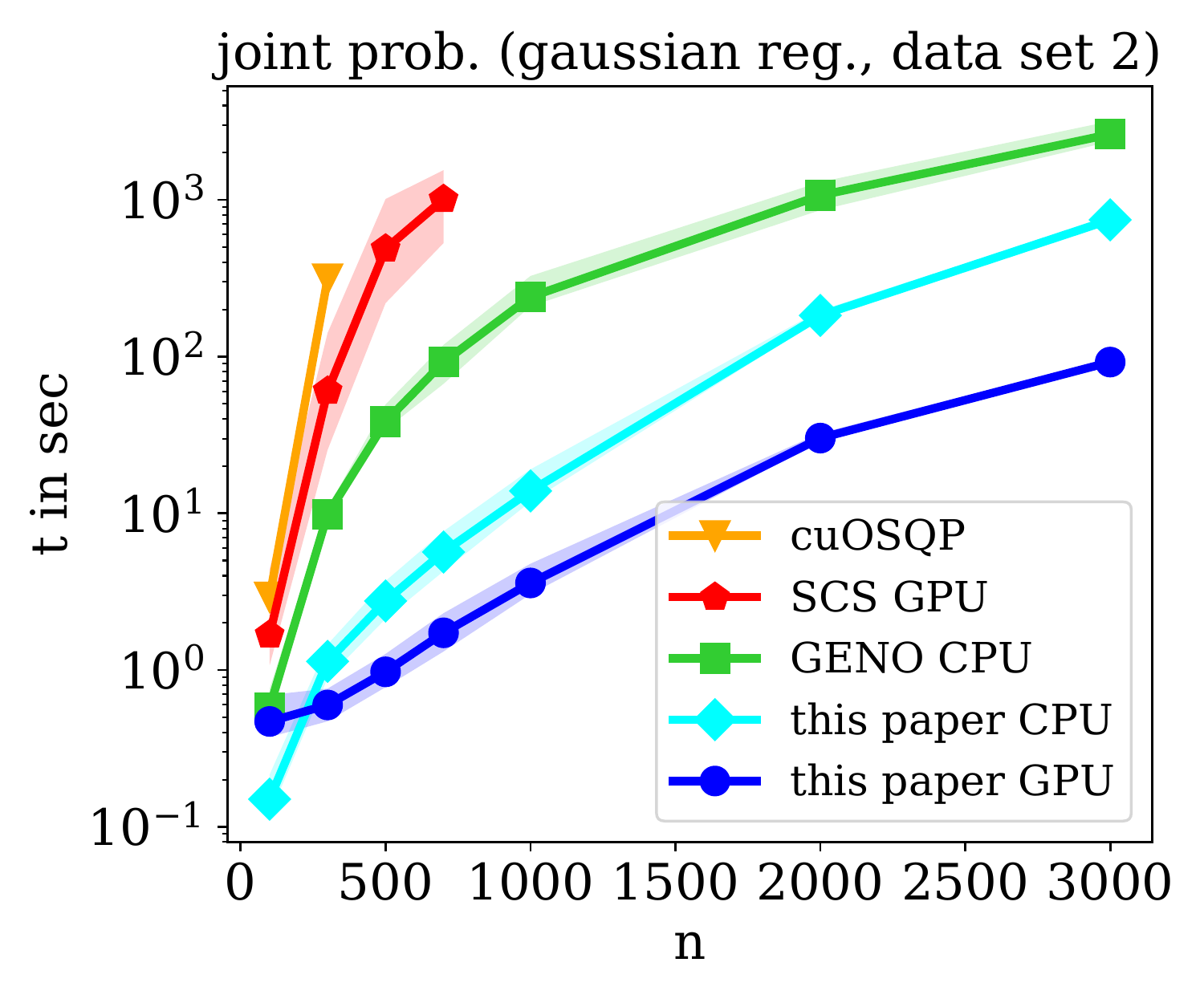}
  \caption{Running times for computing a joint probability distribution from two marginal distributions. The two plots on the left show the running times for the first data set and the two plots on the right the running times for the second data set. For each data set, we show the running times when the entropy prior is used and when the Gaussian prior is used. Note, that the cuOSQP solver cannot solve the problem when the entropy prior is used. Since the second data set is drawn from a random distribution, we ran all solvers ten times and report the mean running time and corresponding error bars.}
  \label{fig:jointProb}
\end{figure}

\begin{table}[h!]
\begin{center}
  \small
  \begin{tabular}{l*{5}{r}}
    \toprule
    \multirow{2}{*}{Solver} & \multicolumn{4}{c}{joint prob. (entropy reg., data set 1)} \\
    \cmidrule{2-5}
    & 100 & 500 & 1000 & 2000 \\
    \midrule
    this paper GPU & $0.2$ & $0.5$ & $1.1$ & $6.4$ \\
    this paper CPU & $0.1$ & $0.9$ & $4.6$ & $41.2$ \\
    GENO CPU & $0.1$ & $0.9$ & $4.8$ & $38.5$ \\
    SCS GPU & $151.8$ & $933.5$ & N/A & N/A \\
    \bottomrule
  \end{tabular}
  \caption{Running times in seconds for joint prob. (entropy reg., data set 1).}
  \end{center}
\end{table}

\begin{table}[h!]
\begin{center}
  \small
  \begin{tabular}{l*{8}{r}}
    \toprule
    \multirow{2}{*}{Solver} & \multicolumn{7}{c}{joint prob. (gaussian reg., data set 1)} \\
    \cmidrule{2-8}
    & 100 & 300 & 500 & 700 & 1000 & 2000 & 3000 \\
    \midrule
    this paper GPU & $0.1$ & $0.2$ & $0.3$ & $0.4$ & $0.8$ & $4.8$ & $13.0$ \\
    this paper CPU & $0.0$ & $0.2$ & $0.8$ & $1.2$ & $3.6$ & $25.1$ & $86.2$ \\
    GENO CPU & $0.1$ & $0.4$ & $1.1$ & $1.8$ & $4.1$ & $27.7$ & $90.9$ \\
    cuOSQP & $3.1$ & $318.8$ & N/A & N/A & N/A & N/A & N/A \\
    SCS GPU & $0.4$ & $3.5$ & $12.5$ & $36.0$ & $114.1$ & $861.4$ & N/A \\
    \bottomrule
  \end{tabular}
  \caption{Running times in seconds for joint prob. (gaussian reg., data set 1).}
  \end{center}
\end{table}

\begin{table}[h!]
\begin{center}
  \small
  \begin{tabular}{l*{6}{r}}
    \toprule
    \multirow{2}{*}{Solver} & \multicolumn{5}{c}{joint prob. (entropy reg., data set 2)} \\
    \cmidrule{2-6}
    & 100 & 300 & 500 & 1000 & 2000 \\
    \midrule
    this paper GPU & $0.2\pm 0.1$ & $0.3\pm 0.1$ & $0.5\pm 0.2$ & $0.7\pm 0.0$ & $1.3\pm 0.2$ \\
    this paper CPU & $0.1\pm 0.0$ & $0.7\pm 0.1$ & $1.4\pm 0.6$ & $4.5\pm 0.0$ & $10.1\pm 1.5$ \\
    GENO CPU & $0.1\pm 0.0$ & $0.4\pm 0.2$ & $1.8\pm 0.0$ & $6.3\pm 0.9$ & $10.8\pm 0.0$ \\
    SCS GPU & $417.9\pm 257.7$ & $4267.7\pm 1408.3$ & N/A & N/A & N/A \\
    \bottomrule
  \end{tabular}
  \caption{Running times in seconds for joint prob. (entropy reg., data set 2).}
  \end{center}
\end{table}

\begin{table}[h!]
\begin{center}
  \small
  \begin{tabular}{l*{6}{r}}
    \toprule
    \multirow{2}{*}{Solver} & \multicolumn{5}{c}{joint prob. (gaussian reg., data set 2)} \\
    \cmidrule{2-6}
                   & 300           & 700 & 1000 & 2000 & 3000 \\
    \midrule
    this paper GPU & $0.6\pm 0.1$    & $1.7\pm 0.3$ & $3.6\pm 0.6$ & $30.2\pm 1.5$ & $92.3\pm 1.9$ \\
    this paper CPU & $1.1\pm 0.2$    & $5.7\pm 1.0$ & $13.9\pm 2.3$ & $182.9\pm 9.1$ & $746.1\pm 21.9$ \\
    GENO CPU       & $9.9\pm 0.6$    & $92.2\pm 13.2$ & $240.1\pm 35.3$ & $1068.0\pm 153.7$ & $2622.4\pm 236.7$ \\
    cuOSQP         & $314.0\pm 7.8$  & N/A & N/A & N/A & N/A \\
    SCS GPU        & $60.4\pm 39.4$  & $1009.3\pm 355.1$ & N/A & N/A & N/A \\
    \bottomrule
  \end{tabular}
  \caption{Running times in seconds for joint prob. (gaussian reg., data set 2).}
  \end{center}
\end{table}

\end{document}